\newtheorem{theorem}{Theorem}[section]
\newtheorem{lemma}[theorem]{Lemma}
\newcommand{\CASE}[1]{\STATE \textbf{case} #1\textbf{:} \begin{ALC@g}}
\newcommand{\ENDCASE}{\end{ALC@g}}
\newcommand{\DEFAULT}{\STATE \textbf{default:} \begin{ALC@g}}
\newcommand{\ENDDEFAULT}{\end{ALC@g}}
\newcommand{\DEFAULTLINE}[1]{\STATE \textbf{default:} }
\title{\LARGE \bf
Safety Barrier Certificates for Heterogeneous Multi-Robot Systems*
}
\author{Li Wang, Aaron Ames, and Magnus Egerstedt$^\dagger$
\thanks{*The work by the first and third authors was sponsored by Grant No. N0014-15-1-2115 from the U.S. Office for Naval Research, and the work of the second author is supported by NSF CPS-1239055.}
\thanks{$^\dagger$Li Wang, Aaron Ames and Magnus Egerstedt are with the school of Electrical and Computer Engineering, Georgia Institute of Technology, Atlanta, GA 30332, USA.
        Email: {\tt\small \{liwang, ames, magnus\}@gatech.edu}}%
}
\begin{document}
\maketitle
\thispagestyle{empty}
\pagestyle{empty}

\begin{abstract}
This paper presents a formal framework for collision avoidance in multi-robot systems, wherein an existing controller is modified in a minimally invasive fashion to ensure safety. We build this framework through the use of control barrier functions (CBFs) which guarantee forward invariance of a safe set; these yield safety barrier
certificates in the context of heterogeneous robot dynamics subject to acceleration bounds. Moreover, safety barrier certificates are extended to a distributed control framework, wherein neighboring agent dynamics are unknown, through local parameter identification. The end result is an optimization-based controller that formally guarantees collision free behavior in heterogeneous multi-agent systems by minimally modifying the desired controller via safety barrier constraints. This formal result is verified in simulation on a multi-robot system consisting of both ``cumbersome" and ``agile" robots, is demonstrated experimentally on a system with a Magellan Pro robot and three Khepera III robots.
\end{abstract}

\section{INTRODUCTION}
When designing coordinated controllers for teams of mobile robots, the primary control objective tends to drive the behavior of the team so as to realize tasks such as achieving and maintaining formations, covering areas, or collective transport \cite{bullo2009distributed, mesbahi2010graph}. {\it Safety}, in terms of collision-avoidance, is then added as a secondary controller that overrides the existing controllers on individual robots if they are about to collide, e.g., following the behavior-based control paradigm \cite{arkin1998behavior}. As a result, what is actually deployed is not always what the design calls for, and as the robot density increases, the team spends more and more time avoiding collisions as opposed to progressing toward the primary design objective. 

One remedy to this problem is to make collision-avoidance an explicit part of the design. This, however, means that the already established, coordinated multi-robot controllers in the literature \cite{bullo2009distributed, mesbahi2010graph, ren2008distributed} are no longer valid and must be revisited. An alternative view, as is for example pursued in \cite{tomlin1998conflict} for two aircrafts performing optimal evasive maneuver, is to introduce a minimally invasive collision-avoidance controller, i.e., a controller that only changes the original control program when it is absolutely necessary.  But the heavy computation associated with solving the Hamilton-Jacobian-Bellman Equations prohibits the applicability of \cite{tomlin1998conflict} to large-scale mutli-robot systems.  Similarly, the concept of ``velocity obstacle" was developed in \cite{van2011reciprocal} to generate collision free trajectory in cluttered multi-agent workspace, while the constant velocity assumption severely limits available control options. This approach was pursued in \cite{borrmann2015Swarm}, where the main idea is to let the actual control input associated with Robot $i$, $u_i$, be as close to the designed control input $\hat{u}_i$ in a least-squares sense, subject to safety contstraints. 

The way that safety constraints were encoded in \cite{borrmann2015Swarm} was using distributed {\it barrier functions}, that prevented the robots from entering unsafe states. This line of inquiry is continued in this paper, but in the context of {\it heterogeneous} robot teams. In particular, the barrier functions in \cite{borrmann2015Swarm} were symmetric in the sense that the responsibility for avoiding collisions was shared in an equitable manner among the robots. But, in a heterogeneous multi-robot system, not all agents are equally nimble and can respond to potential collisions in the same way, due to such factors as different maximal accelerations. In this paper, we pursue this question and we show how barrier functions can be used also for teams of heterogeneous networks, even when the robots are unaware of which class neighboring robots belong.

The reason why heterogeneous multi-agent systems are of importance is that they, through the robots' diverse set of capabilities, can solve some tasks more effectively than their homogeneous counter parts, i.e., \cite{abbas2011distribution}. Moreover, heterogeneity already exists in many systems, such as transportation systems with automobiles and trucks \cite{arasan2005methodology}, multirobot systems with ground and aerial robots \cite{ding2010multi}, mobile sensor network with nodes with varying 
locomotion capabilities \cite{parker2003heterogeneous}, just to name a few. As such, collision-avoidance algorithms must be extended also to heterogeneous systems. Yet, such an extension is not straightforward in that agents with ``aggressive", ``neutral" or even ``timid" behaviors must be able to respond to possible collisions in dramatically different manners.

Motivated by these considerations, this paper extends previous work on safety barrier certificates in \cite{borrmann2015Swarm} in two important directions. First, we propose a provably safe way to decentralize the barrier certificates that explicitly takes the agents' heterogeneous dynamics into account. In this paper, the robotic swarm is heterogeneous in the sense that agents have different acceleration limits (agile or cumbersome), and use different barrier certificate parameters (aggressive, neutral or conservative). Second, we formally ensure safety of the robotic swarm when no prior information about neighboring agents' dynamical properties is provided. To achieve this, the agents will have to estimate the dynamical properties of neighboring agents with local measurements, and update online their barrier certificate parameters to generate more reasonable evasive maneuver. The enabling technique for this heterogeneous safety barrier certificates is Control Barrier Function \cite{ames2014CBF, Xu2015Robustness}. Control Barrier Function is similar to Control Lyapunov Function in that it provides a way to guarantee the forward invariance of the safety set without computing the system's reachable set. A Quadratic Program (QP) based controller with safety barrier constraints is developed to check the safety of pre-designed control strategy, and generate minimally-invasive and collision-free trajectory.

The remainder of this paper is organized as follows. Section \ref{sec:cbf} revisits the concepts of (zeroing) control barrier function, which is incorporated into the optimization based controller as safety barrier constraints. Heterogeneous safety barrier certificates are then constructed in Section \ref{sec:heterosfc} to generate collision free behaviors for agents' with different dynamical capabilities. Incorporating unknown parameters into heterogeneous barrier certificates without losing safety guarantee is the topic of Section \ref{sec:estsfc}. Simulation and experimental results for heterogeneous barrier certificates are presented in Section \ref{sec:sim} and \ref{sec:exp}. At last, we conclude the paper with a summary and discussion of future work in Section \ref{sec:conclude}.

\section{Background: Control Barrier Functions} \label{sec:cbf}
In this section, we will review the fundamentals of Control Barrier Functions (CBFs), which is employed as a means to ensure that the robots execute collision-free trajectories. CBFs are conceptually similar to Control Lyapunov Functions (CLFs) in that they can be used to guarantee desired system properties without explicitly having to compute the forward reachable set. Analogously to CLFs, by constraining the time derivative of the CBFs within prescribed bounds, CBFs can formally guarantee the forward invariance of a desired set, e.g. safe set. 

The fundamental idea behind CBFs is thus to design them in such a way that the agents always remain in the safe set. We are particularly interested in control affine dynamic systems because they result in affine safety barrier constraints, which can be incorporated into simple QP based controllers. Even though the main focus of this paper is on double integrator dynamics, we start the exposition with general control affine case. In particular, consider a nonlinear control system in affine form
\begin{equation}
\dot{x}=f(x)+g(x) u
\label{eqn:system}
\end{equation}
where $x \in \mathbb{R}^n$ and $u \in U \subset \mathbb{R}^m$, with $f$ and $g$ locally Lipschitz continuous. Note that, for the sake of simplicity, we will assume that (\ref{eqn:system}) is forward complete, i.e. solutions $x(t)$ are defined for all $t\geq 0$.

Suppose now that we have a set $\mathcal{C}\subset\mathbb{R}^n$ where we wish the state of the robot to remain. The goal is thus to design a controller $u$ that guarantees the forward invariance of set $\mathcal{C}$, i.e., solutions to (\ref{eqn:system}) that start in set $\mathcal{C}$, stay in set $\mathcal{C}$ for all time. We will assume that the set $\mathcal{C}$, boundary $\partial\mathcal{C}$ and interior $\text{Int}(\mathcal{C})$ can be defined as levels sets to a particular function $h(x)$,
\begin{equation}
\begin{aligned}
\mathcal{C} &= \{x\in\mathbb{R}^n\,:h(x)\geq 0\},\\
\partial\mathcal{C} &= \{x\in\mathbb{R}^n\,:h(x)=0\},\\
\text{Int}(\mathcal{C}) &= \{x\in\mathbb{R}^n\,:h(x)>0\},\\
\end{aligned}
\label{eqn:admSet}
\end{equation}
and we have the following definition that allows us to be precise about what safety entails, as was done in \cite{Xu2015Robustness},
\textit{Definition 1}: Given a dynamical system (\ref{eqn:system}) and the set $\mathcal{C}$ defined by (\ref{eqn:admSet}) for a continuously differentiable function $h  : \mathbb{R}^n\to\mathbb{R}$, if there exist a locally Lipschitz extended class $\mathcal{K}$ function $\alpha$ (strictly increasing and $\alpha(0) =0$) and a set $\mathcal{C}\subseteq\mathcal{D}\subset\mathbb{R}^n$ such that, for all $x\in\mathcal{D}$,
\begin{align}
\underset{u\in U}{\text{inf}}\left[ L_fh(x)+L_gh(x)u+\alpha(h(x))\right] \geq 0
\label{eqn:reqB1}
\end{align}
then the function $h$ is called a Zeroing Control Barrier Function (ZCBF) defined on set $\mathcal{D}$.

\medskip
Now, given a ZCBF $h$, the set of feasible control inputs is
\begin{align}
\label{eqn:reqB2}
K (x) =
     \left \lbrace u\in U\ : \ L_fh(x)+L_gh(x)u+\alpha(h(x))\geq 0\right\rbrace \nonumber
\end{align}
and in \cite{Xu2015Robustness}, the following key result was obtained;

\textit{Theorem \cite{Xu2015Robustness}}: \textit{Given a set $\mathcal{C} \subset \mathbb{R}^n$ defined by (\ref{eqn:admSet}) and a ZCBF $h$ defined on $\mathcal{D}$ with $\mathcal{C}\subseteq\mathcal{D}\subset\mathbb{R}^n$, any Lipschitz continuous controller $u\colon\mathcal{D}\to\mathbb{R}$ such that $u\in K(x)$ for the system (\ref{eqn:system}) renders the set $\mathcal{C}$ forward invariant.}

ZCBFs also imply asymptotic stability of set $\mathcal{C}$, which provides favorable robustness properties with respect to different perturbations on system (\ref{eqn:system}) \cite{Xu2015Robustness}. If the state $x$ is perturbed into $\mathcal{D}\setminus\mathcal{C}$, it will converge asymptotically back into set $\mathcal{C}$.

In this paper, we will choose $\alpha(h(x)) = \gamma h^3(x)$ for defining our ZCBF candidate, which means the designed controller $u$ needs to satisfy the following constraint.
\begin{equation}
\ L_fh(x)+L_gh(x)u+\gamma h^3(x)\geq 0 \label{eq:zcbf}
\end{equation}

\section{Heterogeneous Safety Barrier Certificates} \label{sec:heterosfc}
This section focuses on constructing the decentralized heterogeneous safety barrier certificates that take into account the heterogeneity in agents' dynamical properties. Importantly, in an effort to reduce the amount of information required when executing barrier certificates, we will explore safety guarantees subject to unknown parameters of neigboring agents in Section \ref{sec:estsfc}.

\subsection{Problem Formulation} \label{set:pform}
Consider a heterogeneous robotic swarm containing $N$ mobile agents represented with set $\mathcal{M}=\lbrace 1,2,\dots,N\rbrace$, the robot agent $i\in\mathcal{M}$ is modelled with double integrator dynamics (\ref{eqn:dint}). Agents in the robotic swarm are heterogeneous in the sense that each of them has different dynamical capability, which is modelled with different speed and acceleration limit:
\begin{equation}\label{eqn:dint}
     \begin{bmatrix}
       \dot{\mathbf{p}}_i  \\[0.3em]
       \dot{\mathbf{v}}_i  \\[0.3em]
     \end{bmatrix}
     = \begin{bmatrix}
       0 & I_{2\times2}  \\[0.3em]
       0 & 0
     \end{bmatrix}  \begin{bmatrix}  \mathbf{p}_i  \\[0.3em]
       \mathbf{v}_i  \end{bmatrix}
       +  \begin{bmatrix}  0 \\[0.3em]    I_{2\times2} \end{bmatrix}  \mathbf{u}_i,
\end{equation}
where $\mathbf{p}_i\in \mathbb{R}^2$, $\mathbf{v}_i\in \mathbb{R}^2$, and $\mathbf{u}_i\in \mathbb{R}^2$ are the position, velocity, and acceleration of agent $i$ respectively. The ensemble form is $\mathbf{p}\in \mathbb{R}^{2N}$, $\mathbf{v}\in \mathbb{R}^{2N}$, and $\mathbf{u}\in \mathbb{R}^{2N}$. The speed and acceleration limits of agent $i$ are $\|\mathbf{v}_i\|_p\leq \beta_i$ and $\|\mathbf{u}_i\|_p\leq \alpha_i$, where $\|\cdot\|_p$ is vector $p-$norm depending on actual robot model. The relative position and relative velocity between agent $i$ and $j$ are denoted as $\Delta \mathbf{p}_{ij}= \mathbf{p}_i - \mathbf{p}_j$ and $\Delta \mathbf{v}_{ij} = \mathbf{v}_i - \mathbf{v}_j$. 

Next, we need to formulate an appropriate safe operation set $\mathcal{C}$ that characterizes safety of the robotic swarm. A pairwise safety constraint (\ref{eqn:2ndsafe}) is adopted to ensure that agents will always keep safety distance $D_s$ away from each other in dangerous scenarios while the maximum braking force is being applied; this results in the constraint:
\begin{equation}\label{eqn:2ndsafe}
- \frac{\Delta \mathbf{p}_{ij}^T}{\|\Delta \mathbf{p}_{ij}\|}\Delta \mathbf{v}_{ij} \leq \sqrt{2(\alpha_i + \alpha_j)(\|\Delta \mathbf{p}_{ij}\|-D_s)} , \forall i \neq j
\end{equation}

As illustrated in Fig \ref{fig:dpdv}, the normal component ($\Delta \bar{v} = \frac{\Delta \mathbf{p}_{ij}^T}{\|\Delta \mathbf{p}_{ij}\|}\Delta \mathbf{v}_{ij}$) of the relative velocity between agent $i$ and $j$ is the component that might lead to collision. For instance, it is considered safe when two neighboring agents' relative velocity is perpendicular to their relative position ($\Delta \bar{v}=\mathbf 0$).
\vspace{-0.1in}
\begin{figure}[h]
  \centering
  \resizebox{2in}{!}{\includegraphics{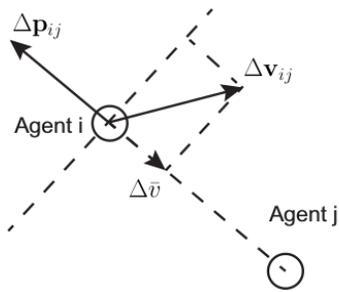}}
  \caption{Relative position and velocity between agent $i,j$}
  \label{fig:dpdv}
\end{figure}

The safety constraint (\ref{eqn:2ndsafe}) is derived by regulating the normal component of the relative velocity $\Delta \bar{v}$, while the tangent component is unregulated. When two agents are actively avoiding collision, the maximum relative braking acceleration is $(\alpha_i + \alpha_j)$. The safety requirement is to maintain safety distance $D_s$ away from each other when the maximum relative braking acceleration $(\alpha_i + \alpha_j)$ is applied in dangerous scenarios, which leads to:
\vspace{-0.05in}
\begin{eqnarray}
\|\Delta \mathbf{p}_{ij}\| - \frac{(\Delta \bar{v})^2}{2(\alpha_i + \alpha_j)} \geq  D_s, \quad \forall i \neq j. \label{eqn:consderiv}
\end{eqnarray}
Note that when two agents are moving closer to each other ($\Delta \bar{v} <0$), (\ref{eqn:consderiv}) regulates how fast the approaching speed could be; when they are moving away from each other $\Delta \bar{v} \geq 0$), no constraint is enforced because safety is not endangered. Those two cases combined together gives the safety constraint in (\ref{eqn:2ndsafe}). Therefore, we can formally define the safe operation set $\mathcal{C}$.
\vspace{-0.1in}
\begin{eqnarray}
\mathcal{C}_{ij} &=& \{(\mathbf{p}_{i},\mathbf{v}_{i}) |h_{ij} = \sqrt{2(\alpha_i + \alpha_j)(\|\Delta \mathbf{p}_{ij}\|-D_s)} \nonumber \\ 
&& +  \frac{\Delta \mathbf{p}_{ij}^T}{\|\Delta \mathbf{p}_{ij}\|}\Delta \mathbf{v}_{ij} \geq 0  \} , j\neq i \label{eqn:cij} \\
\mathcal{C} &=& \underset{{i \in \mathcal{M}}}{\prod} \left\{{\bigcap_{\substack{j \in \mathcal{M}\\ j\neq i}}\mathcal{C}_{ij}}\right\}  \label{eqn:c}
\end{eqnarray}
where $h_{ij}$, short for $h_{ij}(\Delta \mathbf{p}_{ij},\Delta \mathbf{v}_{ij})$, is a ZCBF candidate to ensure that the pairwise constraint (\ref{eqn:2ndsafe}) always holds. $\underset{i \in \mathcal{M}}{\prod}$ is the Cartesian product over the states of all agents in the set of robots.

\noindent \textit{Definition 2}: The robotic swarm represented with set $\mathcal{M}$ with dynamics given in (\ref{eqn:dint}) is defined to be \textit{safe} if the state $(\mathbf p, \mathbf v)$ of the system stays in set $\mathcal{C}$ for all time. 

According to \textit{Definition 2}, the robotic swarm needs to simultaneously satisfy all the pairwise safety constraints to ensure safety. ZCBF constraints are constructed to guarantee the forward invariance of the safe operation set $\mathcal{C}$, i.e. there are the following pairwise CBF constraints:
\begin{equation}\label{eqn:zcbfc}
L_fh_{ij} + L_gh_{ij}\mathbf u+\gamma h_{ij}^3\geq 0, \forall i \neq j,
\end{equation}

\begin{theorem}
The robotic swarm represented with set $\mathcal{M}$ is \textit{safe}, if the control variable $\mathbf{u}$ satisfies all the pairwise ZCBF constraints in (\ref{eqn:zcbfc}).
\end{theorem}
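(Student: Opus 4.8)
The plan is to obtain the result by invoking the ZCBF invariance theorem of \cite{Xu2015Robustness} quoted above once per pairwise constraint and then closing under intersection. Fix an initial condition $(\mathbf p(0),\mathbf v(0))\in\mathcal{C}$ and a Lipschitz continuous feedback $\mathbf u$ satisfying every constraint in (\ref{eqn:zcbfc}); by the assumed forward completeness of (\ref{eqn:dint}) the resulting closed-loop solution $(\mathbf p(t),\mathbf v(t))$ exists for all $t\geq 0$. For a fixed ordered pair $i\neq j$ I would first record the elementary structural facts: $h_{ij}$ depends only on the relative quantities $(\Delta\mathbf p_{ij},\Delta\mathbf v_{ij})$, the set $\mathcal{C}_{ij}$ in (\ref{eqn:cij}) is precisely its zero-superlevel set, and $h_{ij}$ is continuously differentiable on a domain $\mathcal{D}_{ij}$ with $\mathcal{C}_{ij}\subseteq\mathcal{D}_{ij}$ (e.g.\ $\mathcal{D}_{ij}=\{\|\Delta\mathbf p_{ij}\|>D_s\}$, where the square-root term is smooth). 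I would also check that the chosen $\alpha(r)=\gamma r^3$ is a locally Lipschitz extended class $\mathcal{K}$ function, so the hypotheses of \textit{Definition 1} have the required form.

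The core computation is to show that along the closed-loop trajectory $\dot h_{ij}\geq -\gamma h_{ij}^3$. Writing $\dot h_{ij}=L_fh_{ij}+L_gh_{ij}\mathbf u$ with $f,g$ the drift and input matrices of the ensemble model (\ref{eqn:dint}), the term $L_gh_{ij}\mathbf u$ collapses to $\tfrac{\partial h_{ij}}{\partial\mathbf v_i}\mathbf u_i+\tfrac{\partial h_{ij}}{\partial\mathbf v_j}\mathbf u_j$, since $h_{ij}$ is independent of all other agents' states; this is exactly the quantity controlled in (\ref{eqn:zcbfc}), so the hypothesis yields $L_fh_{ij}+L_gh_{ij}\mathbf u+\gamma h_{ij}^3\geq 0$, i.e.\ $\mathbf u(x)\in K_{ij}(x)$ for the feasible set associated with the ZCBF $h_{ij}$. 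Thus $h_{ij}$ is a ZCBF on $\mathcal{D}_{ij}$, and the cited theorem of \cite{Xu2015Robustness} (a Nagumo/comparison-lemma argument against the scalar flow $\dot y=-\gamma y^3$) applies: starting in $\mathcal{C}_{ij}$ forces $(\mathbf p(t),\mathbf v(t))\in\mathcal{C}_{ij}$ for all $t\geq 0$, so $\mathcal{C}_{ij}$ is forward invariant under the closed loop.

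Finally I would assemble the pieces. Regarding each $\mathcal{C}_{ij}$ as a cylinder in the full state space $\mathbb{R}^{2N}\times\mathbb{R}^{2N}$, the set $\mathcal{C}$ in (\ref{eqn:c}) is the set of states meeting \emph{all} pairwise constraints, i.e.\ $\mathcal{C}=\bigcap_{i\neq j}\mathcal{C}_{ij}$; a finite intersection of forward invariant sets is forward invariant, hence $\mathcal{C}$ is forward invariant, which by \textit{Definition 2} is precisely the statement that the swarm is safe. The step I expect to require the most care is verifying that $h_{ij}$ is a genuine $C^1$ ZCBF up to $\partial\mathcal{C}_{ij}$: the gradient of the term $\sqrt{2(\alpha_i+\alpha_j)(\|\Delta\mathbf p_{ij}\|-D_s)}$ degenerates as $\|\Delta\mathbf p_{ij}\|\downarrow D_s$, so one must argue that the distance-$D_s$ locus with nonzero approach speed lies outside $\mathcal{C}_{ij}$ (on $h_{ij}=0$ with $\|\Delta\mathbf p_{ij}\|=D_s$ one necessarily has $\Delta\bar v=0$) and restrict $\mathcal{D}_{ij}$ accordingly so that the quoted theorem literally applies; a secondary point is ensuring the composite controller $\mathbf u$ inherits Lipschitz continuity, as required by that theorem.
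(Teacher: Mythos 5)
Your proposal is correct and follows essentially the same route as the paper: verify that each $h_{ij}$ is a ZCBF with $\alpha(r)=\gamma r^3$ and invoke the cited invariance theorem of \cite{Xu2015Robustness}, the only difference being that you make explicit the pairwise application plus intersection step and the technical hypotheses (a $C^1$ domain $\mathcal{D}_{ij}$ avoiding the square-root degeneracy at $\|\Delta\mathbf{p}_{ij}\|=D_s$, and Lipschitz continuity of the closed-loop $\mathbf{u}$) that the paper's two-line proof leaves implicit. No gap; your version is, if anything, the more careful one.
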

\begin{proof}
If the control variable $\mathbf u$ satisfies the pairwise ZCBF constraints in (\ref{eqn:zcbfc}), then $h_{ij}$ is a valid ZCBF with $\alpha(x) = \gamma x^3$ according to \textit{Definition 1}. Following \textit{Theorem \cite{Xu2015Robustness}}, the forward invariance of set $\mathcal{C}$ is guaranteed, which means the robotic swarm denoted with set $\mathcal{M}$ is \textit{safe}.
\end{proof}
Combining (\ref{eqn:cij}) with (\ref{eqn:zcbfc}), the ZCBF constraint can be rewritten as;
\vspace{-0.1in}
\begin{multline}\label{eqn:cbf3}
- \Delta \mathbf{p}_{ij}^T\Delta \mathbf{u}_{ij} \leq  \gamma h_{ij}^3\|\Delta \mathbf{p}_{ij}\| - \frac{(\Delta \mathbf{v}_{ij}^T\Delta \mathbf{p}_{ij})^2}{\|\Delta \mathbf{p}_{ij}\|^2}  \\+ \|\Delta \mathbf{v}_{ij}\|^2 + \frac{(\alpha_i + \alpha_j)\Delta \mathbf{v}_{ij}^T\Delta \mathbf{p}_{ij} }{\sqrt{2(\alpha_i + \alpha_j)(\|\Delta \mathbf{p}_{ij}\|-D_s)}}  , \quad \forall i\neq j
\end{multline}
This safety barrier constraint can be represented as linear constraints on the control variable $\mathbf{u}$ as $A_{ij}\mathbf u\leq b_{ij}$, where
$$
A_{ij}=[0, ..., \underbrace{-\Delta \mathbf{p}_{ij}^T}_{\text{agent} ~i}, ..., \underbrace{\Delta \mathbf{p}_{ij}^T}_{\text{agent} ~j},\\ ..., 0 ],
$$
and $b_{ij}$ is the right side of (\ref{eqn:cbf3}).

The safety barrier constraints assembled together, termed the {\it safety barrier certificates}, defines the space of permissible controls. The objective of the safety barrier certificates is to validate the safety of pre-designed control strategy $\hat{\mathbf u}$, and interfere with minimal impact to the desired strategy when collision is truly imminent. The goals of collision avoidance and minimal interference are combined together using QP:
\begin{equation}
\label{eqn:QPcontroller}
 \begin{aligned}
\mathbf{u}^* =  & \:\: \underset{\mathbf u}{\text{argmin}}
 & & J(\mathbf u) = \sum_{i=1}^{N} \|{\mathbf u}_{i} - \hat{\mathbf u}_{i} \|^2 &\\
 & \qquad \text{s.t.}
 & & A_{ij}\mathbf{u} \leq b_{ij},\: \qquad  \forall i\neq j,  &  \\
 &
 & &    \| \mathbf u_i\|_\infty  \leq \alpha_i,\:\qquad \forall i \in\mathcal{M}
 \end{aligned}
\end{equation}
where the acceleration limit of agent $i$ is defined with $\infty$-norm for simplicity. This QP based controller follows pre-designed control strategy $\hat{\mathbf u}$ when the system is safe; takes over and computes the closest permissible control in a least-squares sense when collision is about to happen. Note that this QP based controller is a centralized controller, demanding centralized computation, which provides a starting point for decentralized heterogeneous barrier certificates.

\subsection{Decentralized Heterogeneous Barrier Certificates} \label{sec:DecHeteroBar}
Centralized safety barrier certificates face significantly increased communication and computation burden when the size of robotic swarm grows. It is desirable to have decentralized barrier certificates that act only based on local information, while safety is still guaranteed. In the heterogeneous robotic swarm, agents with different acceleration limits have different capabilities to avoid collision. Thus we propose three different strategies to decentralize the safety barrier certificates to each agent based on their acceleration limits. Motivated by the fact that agents with higher acceleration limits are more agile, these agents are assigned with larger portion of the admissible control space. 
\begin{enumerate}
  \item Strategy A partitions the increase rate of ZCBF $\dot{h}_{ij}=\frac{\partial h_{ij}}{\partial \mathbf{x}_i}^T\dot{\mathbf{x}}_i+\frac{\partial h_{ij}}{\partial \mathbf{x}_j}^T\dot{\mathbf{x}}_j$ to two robot agents $i$ and $j$, where $\mathbf x_i = \left[ \begin{array}{c} \mathbf p_i\\ \mathbf v_i \end{array} \right]$ is the state of agent $i$.  
 \begin{eqnarray} \label{eqn:cbfdeca}
-\frac{\partial h_{ij}}{\partial \mathbf{x}_i}^T\dot{\mathbf{x}}_i \leq \frac{\alpha_i}{\alpha_i+\alpha_j}\gamma h_{ij}^3, \nonumber \\
-\frac{\partial h_{ij}}{\partial \mathbf{x}_j}^T\dot{\mathbf{x}}_j\leq \frac{\alpha_j}{\alpha_i+\alpha_j}\gamma h_{ij}^3, \nonumber
\end{eqnarray}
  \item Strategy B distributes $b_{ij}$ to two robot agents. 
\begin{eqnarray} \label{eqn:cbfdecb}
- \Delta \mathbf{p}_{ij}^T\mathbf{u}_i \leq \frac{\alpha_i}{\alpha_i+\alpha_j}b_{ij}, \nonumber \\
\Delta \mathbf{p}_{ij}^T\mathbf{u}_j \leq \frac{\alpha_j}{\alpha_i+\alpha_j}b_{ij}, \nonumber
\end{eqnarray}
  \item Strategy C is a hybrid approach, which is inspired by strategies A and B. It partitions the terms related to acceleration limits of robot agents in (\ref{eqn:cbf3}) and distributes other terms appropriately like strategies A.
\begin{eqnarray}
&- \Delta \mathbf{p}_{ij}^T \mathbf{u}_i + \frac{\Delta \mathbf{p}_{ij}^T\Delta \mathbf{v}_{ij}}{\|\Delta \mathbf{p}_{ij}\|^2}\Delta \mathbf{p}_{ij}^T\mathbf{v}_i - \Delta \mathbf{v}_{ij}^T\mathbf{v}_i \nonumber\\
& \leq \frac{\alpha_i}{\alpha_i+\alpha_j} (\gamma h_{ij}^3\|\Delta \mathbf{p}_{ij}\| + \frac{\sqrt{\alpha_i+\alpha_j}\Delta \mathbf{p}_{ij}^T\Delta\mathbf{v}_{ij} }{\sqrt{2(\|\Delta \mathbf{p}_{ij}\|-D_s)}}), \label{eqn:hybridi} \\
& \Delta \mathbf{p}_{ij}^T \mathbf{u}_j - \frac{\Delta \mathbf{p}_{ij}^T\Delta \mathbf{v}_{ij}}{\|\Delta \mathbf{p}_{ij}\|^2}\Delta \mathbf{p}_{ij}^T\mathbf{v}_j + \Delta \mathbf{v}_{ij}^T\mathbf{v}_j \nonumber\\
& \leq  \frac{\alpha_j}{\alpha_i+\alpha_j} (\gamma h_{ij}^3\|\Delta \mathbf{p}_{ij}\|   + \frac{\sqrt{\alpha_i+\alpha_j}\Delta \mathbf{p}_{ij}^T\Delta\mathbf{v}_{ij} }{\sqrt{2(\|\Delta \mathbf{p}_{ij}\|-D_s)}}), \label{eqn:hybridj}
\end{eqnarray}
\end{enumerate}

These three decentralization strategies differ from each other in the required amount of information to implement the safety barrier certificates as shown in TABLE \ref{tb:compareparam}. The required information is categorized into self known parameters, sensing data and neighbors' parameters. The self known parameters and sensing data can be easily attained by the controller. Meanwhile, obtaining neighboring agents' parameters, e.g. acceleration limit $\alpha_j$, requires identity recognition or communication. Swarm robots are usually designed to be simple with limited hardware resources. In terms of required information, strategy C surpasses A and B by not requiring neighbors' parameters. Handling unknown neighboring agents' safety barrier parameters using strategy C is the topic of Section \ref{sec:estsfc}.
\begin{table}[h]
\caption{Comparison of required information for the implementation of three decentralization strategies}
\label{tb:compareparam}
\begin{center} 
\begin{tabular}{ | c | c| c| c | c |} 
\hline Strategy & Self params & Sensing data & Neighbors' params\\ 
\hline A &  $\alpha_i,\gamma$&$\Delta \mathbf{p}_{ij},\Delta \mathbf{v}_{ij},\mathbf{v}_i$ &$\alpha_j$ \\ 
\hline B & $\alpha_i,\gamma$ &$\Delta \mathbf{p}_{ij},\Delta \mathbf{v}_{ij}$ &$\alpha_j$  \\ 
\hline C & $\alpha_i,\gamma$ &$\Delta \mathbf{p}_{ij},\Delta \mathbf{v}_{ij}, \mathbf{v}_{i},$ &  \\ \hline 
\end{tabular} 
\end{center}
\end{table}

All of the three decentralization strategies still guarantees safety, if the controller follows safety barrier constraints. This is true because they partition the admissible control space to each agent, while safety barrier constraint (\ref{eqn:cbf3}) still holds. With strategy C, we can come up with a decentralized QP-based controller that is minimally invasive to pre-designed controller and provably safe.
\begin{equation}
\label{eqn:QPcontrollerDec}
 \begin{aligned}
\mathbf{u}_i^* =  & \:\: \underset{\mathbf u_i}{\text{argmin}}
 & & J(\mathbf u_i) =  \|{\mathbf u}_{i} - \hat{\mathbf u}_{i} \| &\\
 & \qquad \text{s.t.}
 & & \bar{A}_{ij}\mathbf{u}_i \leq \bar{b}_{ij},\: \qquad  \forall j\neq i,  &  \\
 &
 & &    \| \mathbf u_i\|_\infty  \leq \alpha_i,
 \end{aligned}
\end{equation}
where $\bar{A}_{ij} = -\Delta \mathbf{p}_{ij}^T$, $\bar{b}_{ij} = - \frac{\Delta \mathbf{p}_{ij}^T\Delta \mathbf{v}_{ij}}{\|\Delta \mathbf{p}_{ij}\|^2}\Delta \mathbf{p}_{ij}^T\mathbf{v}_i + \Delta \mathbf{v}_{ij}^T\mathbf{v}_i + \frac{\alpha_i}{\alpha_i+\alpha_j} (\gamma h_{ij}^3\|\Delta \mathbf{p}_{ij}\|   + \frac{\sqrt{\alpha_i+\alpha_j}\Delta \mathbf{p}_{ij}^T\Delta\mathbf{v}_{ij} }{\sqrt{2(\|\Delta \mathbf{p}_{ij}\|-D_s)}})$.

Fig. \ref{fig:sim2agents} illustrates a test case showing how the safety barrier certificates interact with pre-designed controller to guarantee safety. The pre-designed controller is a goal-to-goal controller without considering collision avoidance. The agent moved straight towards the goal when it is executed (Fig. \ref{fig:sim1}). When agents were about to collide, the barrier certificates automatically took over and computed an appropriate way to avoid collision while honoring the pre-designed control as much as possible(Fig. \ref{fig:sim2}). In the given case, agents successfully completed task and avoided collision. This is achieved by solving a simple online QP without computing the complicated forward reachable set.    
\begin{figure}[h]
\centering
\begin{subfigure}{.23\textwidth}
  \centering
  \resizebox{1.6in}{!}{\includegraphics{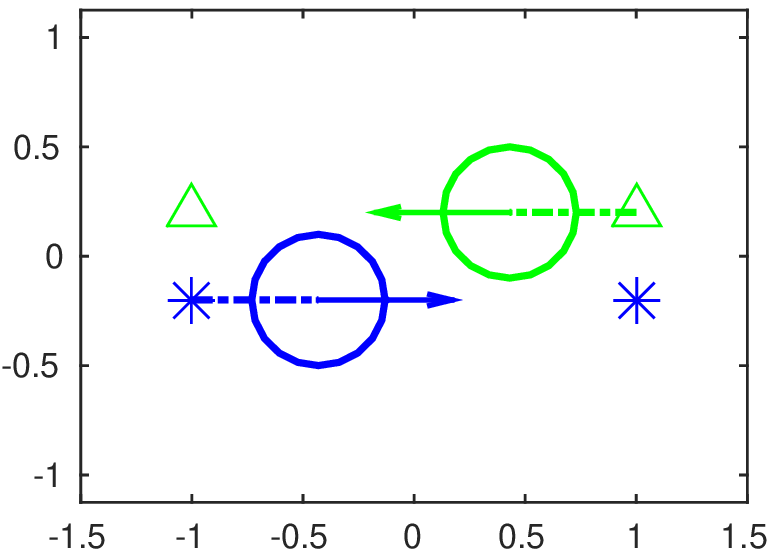}}
  \caption{Pre-designed control is used}
  \label{fig:sim1}
\end{subfigure}
\begin{subfigure}{.23\textwidth}
  \centering
  \resizebox{1.5in}{!}{\includegraphics{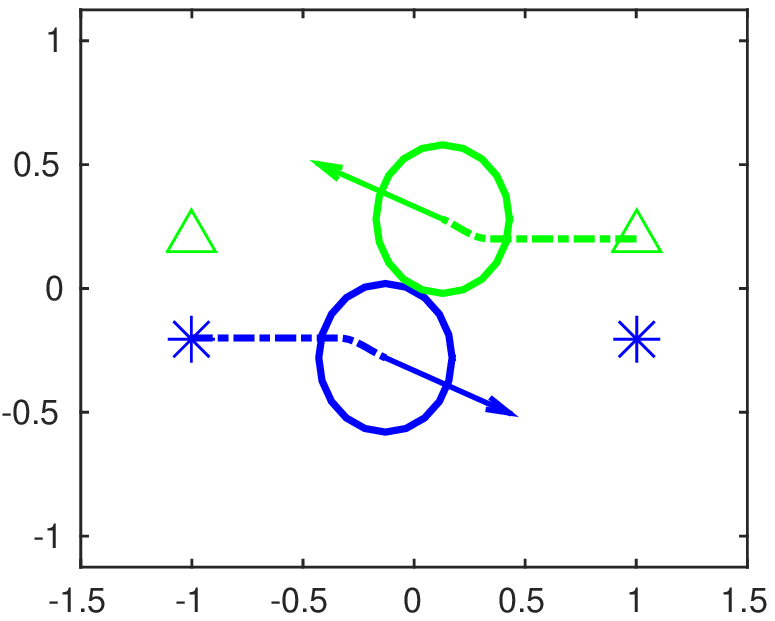}}
  \caption{Barrier certificates take over}
  \label{fig:sim2}
\end{subfigure}%
\begin{subfigure}{.23\textwidth}
  \centering
  \resizebox{1.6in}{!}{\includegraphics{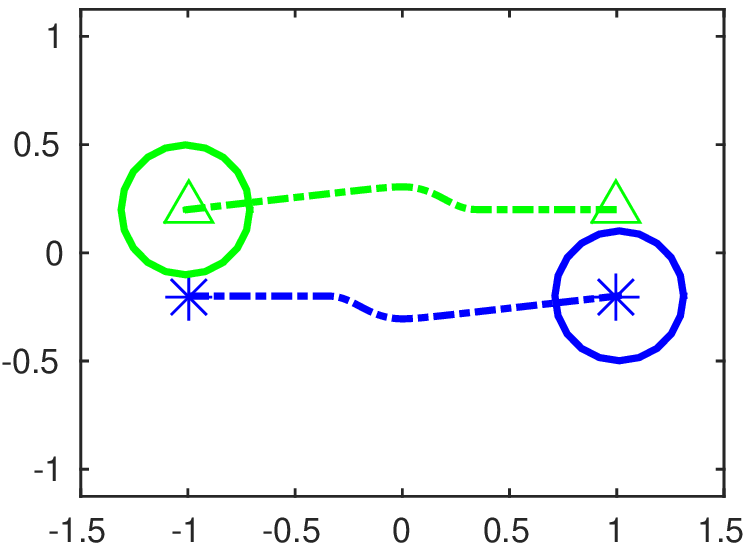}}
  \caption{Task Complete}
  \label{fig:sim3}
\end{subfigure}%
\caption{Two robot agents regulated by safety barrier certificates. The circles, arrows and dash-dot lines represent the agents' safety margin, current velocity and trajectories respectively.}
\label{fig:sim2agents}
\end{figure}

\section{Barrier Certificates with Unknown Parameters} \label{sec:estsfc}
Heterogeneity in agents' dynamical capabilities brings extra challenge to collision avoidance. In the robotic swarm, agents need to first assess how effective other agents can respond to safety threats before making decision for collision avoidance. Meanwhile, swarm robots are often designed to be simple and therefore lack the ability to obtain other agents' parameters. This section addresses the scenario that agents need to ensure safety when some dynamical parameters of other agents are unknown.
\subsection{Barrier Certificates with Different $\gamma$}
The safety barrier parameter $\gamma$ determines how fast the agents' states can approach the boundary of safe operation set $\mathcal{C}$. It turns out that agents with different $\gamma$ are still safe when running the decentralized barrier certificates.
\begin{lemma} \label{lemma:heterogamma}
Two heterogeneous agents $i,j\in\mathcal M$ regulated by safety barrier certificates (\ref{eqn:QPcontrollerDec}) with different parameters $\gamma_i, \gamma_j$ are guaranteed to be safe.
\end{lemma}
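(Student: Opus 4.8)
The plan is to add together the two strategy-C constraints that agents $i$ and $j$ actually enforce in (\ref{eqn:QPcontrollerDec}) and show that their sum is nothing but the centralized pairwise ZCBF constraint (\ref{eqn:cbf3}), with $\gamma$ replaced by a convex combination of $\gamma_i$ and $\gamma_j$. Since agent $i$ solves its QP with its own parameter $\gamma_i$ and agent $j$ with $\gamma_j$, the inequalities in force are (\ref{eqn:hybridi}) with $\gamma\mapsto\gamma_i$ and (\ref{eqn:hybridj}) with $\gamma\mapsto\gamma_j$. I would first sum the left-hand sides: the $\mathbf u$-terms combine to $-\Delta\mathbf p_{ij}^T\Delta\mathbf u_{ij}$, and, using $\mathbf v_i-\mathbf v_j=\Delta\mathbf v_{ij}$, the remaining velocity terms collapse to $\tfrac{(\Delta\mathbf p_{ij}^T\Delta\mathbf v_{ij})^2}{\|\Delta\mathbf p_{ij}\|^2}-\|\Delta\mathbf v_{ij}\|^2$. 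On the right-hand side the two copies of the term $\phi:=\tfrac{\sqrt{\alpha_i+\alpha_j}\,\Delta\mathbf p_{ij}^T\Delta\mathbf v_{ij}}{\sqrt{2(\|\Delta\mathbf p_{ij}\|-D_s)}}$ add up to $\phi$ because $\tfrac{\alpha_i}{\alpha_i+\alpha_j}+\tfrac{\alpha_j}{\alpha_i+\alpha_j}=1$, while the $\gamma$-terms add up to $\bar\gamma_{ij}\,h_{ij}^3\|\Delta\mathbf p_{ij}\|$ with $\bar\gamma_{ij}:=\tfrac{\alpha_i\gamma_i+\alpha_j\gamma_j}{\alpha_i+\alpha_j}$. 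Rearranging, the summed inequality is exactly (\ref{eqn:cbf3}) with $\gamma$ replaced by $\bar\gamma_{ij}$, i.e. $L_fh_{ij}+L_gh_{ij}\mathbf u+\bar\gamma_{ij}h_{ij}^3\geq 0$.

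The remaining step is short. Since $\gamma_i,\gamma_j>0$, the coefficient $\bar\gamma_{ij}$ is a strict convex combination and hence positive, so $x\mapsto\bar\gamma_{ij}x^3$ is again a locally Lipschitz extended class $\mathcal K$ function. Thus $h_{ij}$ is still a valid ZCBF for the pair $(i,j)$ in the sense of \emph{Definition 1}, now with $\alpha(x)=\bar\gamma_{ij}x^3$, and the control actually applied lies in the corresponding $K(x)$. Applying \emph{Theorem}~\cite{Xu2015Robustness} (equivalently, the argument used for the safety theorem of Section~\ref{sec:heterosfc}, which is indifferent to the particular positive constant multiplying $h_{ij}^3$) gives forward invariance of $\mathcal C_{ij}$, so the two agents never violate the safety distance $D_s$; by \emph{Definition 2} they are safe.

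The only genuine work is the bookkeeping in the first paragraph — verifying that the velocity cross-terms in $\mathbf v_i,\mathbf v_j$ reassemble correctly into the drift part $L_fh_{ij}$ of (\ref{eqn:cbf3}) and that the acceleration weights are precisely the convex-combination coefficients. I expect no real obstacle beyond that; the ZCBF machinery is reused verbatim. One minor point worth a sentence is that the expressions above are well-defined only for $\|\Delta\mathbf p_{ij}\|>D_s$, which holds on $\text{Int}(\mathcal C_{ij})$, and the boundary behavior is treated exactly as in the centralized case. It is also worth noting, in passing, that this argument immediately generalizes from one pair to the whole swarm, since the pairwise constraints decouple in the sum.
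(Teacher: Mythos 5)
Your proposal is correct and follows essentially the same route as the paper's own proof: summing the two strategy-C constraints (\ref{eqn:hybridi}) and (\ref{eqn:hybridj}) with agent-specific $\gamma_i,\gamma_j$ to recover the centralized constraint (\ref{eqn:cbf3}) with the weighted parameter $\gamma'=\frac{\alpha_i\gamma_i+\alpha_j\gamma_j}{\alpha_i+\alpha_j}$, i.e.\ $-\dot h_{ij}\leq\gamma' h_{ij}^3$, and then invoking the ZCBF forward-invariance result. Your additional bookkeeping (the velocity cross-terms, positivity of $\gamma'$ as a class $\mathcal K$ coefficient, and the $\|\Delta\mathbf p_{ij}\|>D_s$ caveat) only makes explicit what the paper leaves implicit.
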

\begin{proof}
Agent $i$ and $j$ follow the safety barrier constriant given in (\ref{eqn:hybridi}) and (\ref{eqn:hybridj}) with different parameters $\gamma_i, \gamma_j$. Adding the two safety barrier constraints together gives:
\begin{eqnarray}\label{eqn:cbfgamma}
- \Delta \mathbf{p}_{ij}^T\Delta \mathbf{u}_{ij} \leq  \frac{\gamma'}{B_{ij}}h_{ij}^2\|\Delta \mathbf{p}_{ij}\| - \frac{(\Delta \mathbf{v}_{ij}^T\Delta \mathbf{p}_{ij})^2}{\|\Delta \mathbf{p}_{ij}\|^2} \nonumber \\+ \|\Delta \mathbf{v}_{ij}\|^2 + \frac{(\alpha_i+\alpha_j)\Delta \mathbf{v}_{ij}^T\Delta \mathbf{p}_{ij} }{\sqrt{2(\alpha_i+\alpha_j)(\|\Delta \mathbf{p}_{ij}\|-D_s)}},
\end{eqnarray}
where $\gamma' = \frac{\alpha_i\gamma_i + \alpha_j\gamma_j}{\alpha_i+\alpha_j}$. This inequality can be rewritten as $-\dot{h}_{ij}\leq \gamma'h_{ij}^3$, which guarantees safety as if a weighted version of $\gamma$ is used in the safety barrier certificates.
\end{proof}
This lemma provides the freedom for heterogeneous agents to choose or adaptively change their own $\gamma$ without worrying about endangering safety. The designer can use $\gamma$ to intentionally prioritize certain agents over others, which resembles the real life case of ambulance granted higher priority to go through the traffic flow.

Fig. (\ref{fig:kh}) demonstrates how heterogeneous $\gamma$ in safety barrier certificates can be used to coordinate conflicting agents. Two agents executing goal-to-goal controller regulated by heterogeneous barrier certificates are simulated in three different scenarios. When both agents adopt the same safety barrier parameter $\gamma$, they have neutral behavior when their goals conflict as shown in Fig. (\ref{fig:khm}). When the left agent uses larger $\gamma$, it moves straight to its goal aggressively, while the other agent moves around it to avoid collision (Fig. (\ref{fig:khl})). When the left agent is assigned with smaller $\gamma$, the left agent gives way to other agent when their goals conflict (Fig. (\ref{fig:khr})).
\begin{figure}[h]
\centering
\begin{subfigure}{.23\textwidth}
  \centering
  \resizebox{1.5in}{!}{\includegraphics{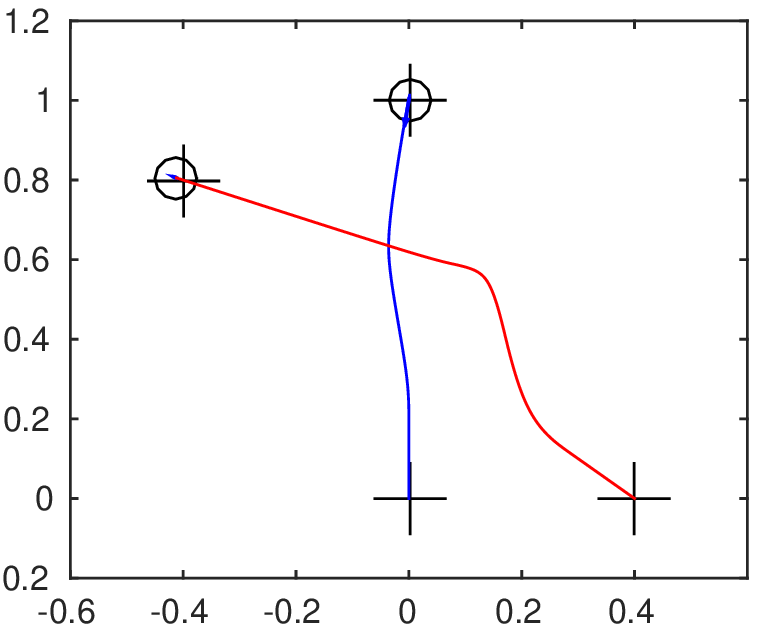}}
  \caption{Both agents are neutral}
  \label{fig:khm}
\end{subfigure}
\begin{subfigure}{.25\textwidth}
  \centering
  \resizebox{1.8in}{!}{\includegraphics{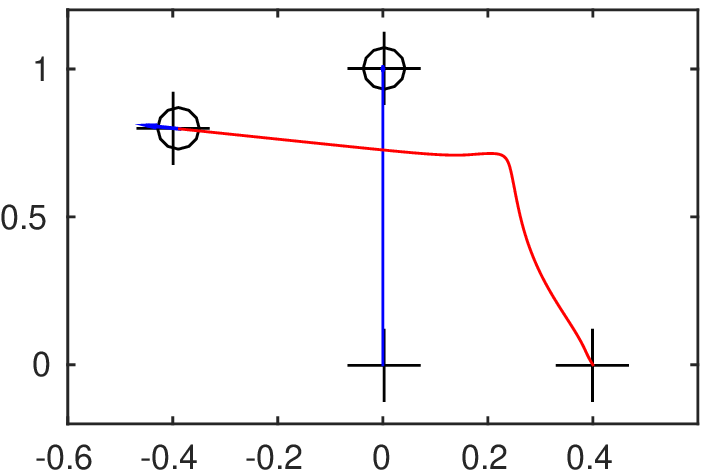}}
  \caption{Left agent aggressive}
  \label{fig:khl}
\end{subfigure}%
\begin{subfigure}{.23\textwidth}
  \centering
  \resizebox{1.5in}{!}{\includegraphics{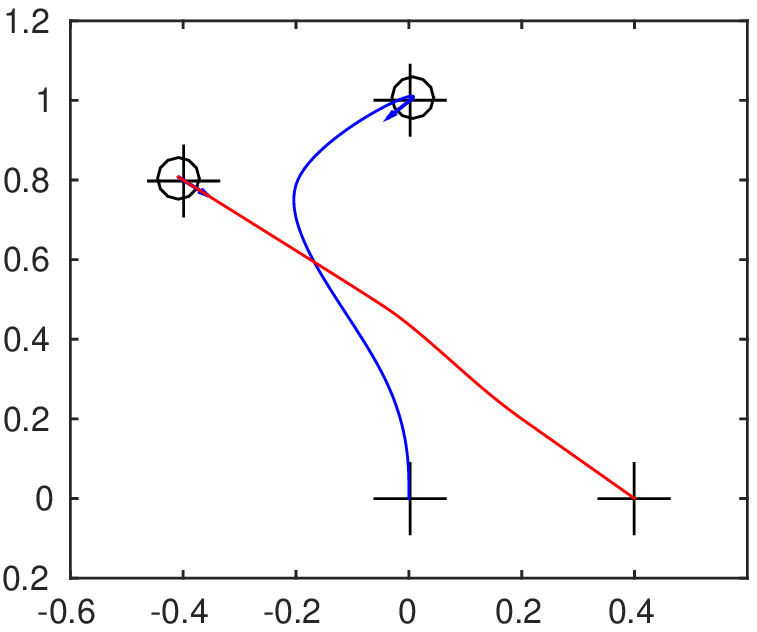}}
  \caption{Left agent conservative}
  \label{fig:khr}
\end{subfigure}%
\caption{Trajectories of two agents regulated by safety barrier certificates with different parameter $\gamma$}
\label{fig:kh}
\vspace{-0.2in}
\end{figure}

With heterogeneous safety barrier certificates, we can define the notion of neighbors to reduce the pairs of necessary safety barrier constraints:
\begin{align}\label{eqn:heteroneig}
\mathcal{N}_i &= \{j\in \mathcal{M}, j\neq i \: | \: \| \Delta \mathbf{p}_{ij}\|\leq D_\mathcal{N}^i, D_\mathcal{N}^i = D_s \nonumber \\ &+ \frac{1}{2(\alpha_i+\alpha_{\min})}(\sqrt[3]{\frac{2(\alpha_i+\alpha_{\min})}{\gamma_i}} + \beta_i + \beta_{\max})^2 \},
\end{align}
where $\alpha_{\min} = \underset{j\in \mathcal{M}, j\neq i}{\min}\{\alpha_j\}$ and $\beta_{\max} = \underset{j\in \mathcal{M}, j\neq i}{\max}\{\beta_j\}$ are the minimal acceleration limit and maximum speed limit of other agents in the robotic swarm. When $\alpha_{\min}$ and $\beta_{\max}$ are unknown, the most conservative values can be used, i.e. lower bound of $\alpha_{\min}$ and upper bound of $\beta_{\max}$. The neighbor's notion is helpful in reducing computation intensity and sensing requirement. This notion is valid because there is no threat of collision when agents are sufficiently far away from each other.
\begin{theorem}
Any agent $i\in \mathcal M$ is guaranteed to be safe if it only forms ZCBFs with its heterogeneous neighbors defined by (\ref{eqn:heteroneig}).
\end{theorem}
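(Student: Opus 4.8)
The goal is to prove that for every $j\neq i$ the pairwise barrier function $h_{ij}(t)$ stays nonnegative for all $t\ge 0$; since being in $\mathcal C_{ij}$ forces $\|\Delta\mathbf p_{ij}\|\ge D_s$ (no collision), this is exactly what it means for agent $i$ to be \textit{safe} in the sense of \textit{Definition 2}. I would work under the standing assumptions that the swarm is initialized in $\mathcal C$, that every agent respects its speed bound $\|\mathbf v_i\|\le\beta_i$, and that every agent runs the same decentralized rule (forming ZCBF constraints only with its own neighbors). The analysis of a fixed pair $(i,j)$ then splits according to whether $j$ currently lies in $\mathcal N_i$.

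The first step is an ``automatic positivity'' estimate: whenever $\|\Delta\mathbf p_{ij}\|\ge D_{\mathcal N}^{i}$---in particular whenever $j\notin\mathcal N_i$---the value $h_{ij}$ is strictly positive no matter what controls are in force. Indeed the velocity term of $h_{ij}$ is bounded below by $-\|\Delta\mathbf v_{ij}\|\ge-(\beta_i+\beta_{\max})$ and the acceleration sum under the square root is at least $\alpha_i+\alpha_{\min}$, so
\begin{align*}
h_{ij}&\ge\sqrt{2(\alpha_i+\alpha_{\min})\big(D_{\mathcal N}^{i}-D_s\big)}-(\beta_i+\beta_{\max})\\
&=\sqrt[3]{\tfrac{2(\alpha_i+\alpha_{\min})}{\gamma_i}}>0 ,
\end{align*}
the equality being just the definition of $D_{\mathcal N}^{i}$ in (\ref{eqn:heteroneig}). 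The estimate holds with ``$\ge$'', hence also at the boundary $\|\Delta\mathbf p_{ij}\|=D_{\mathcal N}^{i}$; the cube-root term in (\ref{eqn:heteroneig}) is precisely the buffer this produces, and it is also the quantity one uses to check that the decentralized QP (\ref{eqn:QPcontrollerDec}) remains feasible as a pair begins to interact.

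The second step treats the (closed, by continuity of $\|\Delta\mathbf p_{ij}(t)\|$ under bounded-input double-integrator dynamics) set of times with $j\in\mathcal N_i$. On a maximal such interval $[t_1,t_2]$ we have either $t_1=0$, where $h_{ij}(0)\ge 0$ by the initialization assumption, or $\|\Delta\mathbf p_{ij}(t_1)\|=D_{\mathcal N}^{i}$, where the first step gives $h_{ij}(t_1)>0$; in either case we enter the interval inside $\mathcal C_{ij}$. Throughout $[t_1,t_2]$ agent $i$ enforces its decentralized constraint (\ref{eqn:hybridi}). On the sub-phases where $i\in\mathcal N_j$ as well, agent $j$ enforces (\ref{eqn:hybridj}); summing the two constraints---or, when $\gamma_i\neq\gamma_j$, applying Lemma~\ref{lemma:heterogamma}---recovers a centralized ZCBF inequality for the pair $(i,j)$, so the forward-invariance theorem of \cite{Xu2015Robustness} keeps $h_{ij}\ge 0$. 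On the sub-phases where $i\notin\mathcal N_j$ we have $\|\Delta\mathbf p_{ij}\|>D_{\mathcal N}^{j}$, and the first step, applied with $i$ and $j$ interchanged, again forces $h_{ij}>0$. Since every ``both-enforce'' sub-phase is entered with $h_{ij}>0$ (the same boundary-continuity argument), the sub-phases glue to give $h_{ij}\ge 0$ on all of $[t_1,t_2]$. Combining with the first step yields $h_{ij}(t)\ge 0$ for all $t\ge0$, and ranging over $j\neq i$ proves the theorem.

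The part I expect to be delicate is the bookkeeping around the \emph{asymmetry} of the neighbor relation---the thresholds $D_{\mathcal N}^{i}$ and $D_{\mathcal N}^{j}$ differ in general---and around the instants at which pairwise constraints switch on and off: one must be sure that while a pair's constraint is inactive the automatic-positivity estimate always covers it, and that at the instant of activation $h_{ij}$ is already strictly inside $\mathcal C_{ij}$ so that \cite{Xu2015Robustness} applies from that instant onward. Checking that the radius (\ref{eqn:heteroneig}) is tuned so these two regimes overlap---which is exactly where the worst-case substitutions $\alpha_j\!\to\!\alpha_{\min}$, $\beta_j\!\to\!\beta_{\max}$ and the cube-root margin enter---is the crux; the rest is the routine algebra already displayed in (\ref{eqn:cbf3})--(\ref{eqn:hybridj}).
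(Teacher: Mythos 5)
Your proposal is correct, but it reaches the conclusion by a genuinely different route than the paper. The paper splits the analysis by inter-agent distance into three regimes ($\|\Delta \mathbf p_{ij}\|$ above both radii, between them, below both) and in \emph{every} regime establishes a pairwise differential inequality of ZCBF type, $-\dot h_{ij}\leq \gamma\, h_{ij}^3$ with a regime-dependent $\gamma$ (citing the worst-case bound on $\dot h_{ij}$ from \cite{borrmann2015Swarm} when one or both constraints are inactive, Theorem 3.1 in the one-sided case, and Lemma \ref{lemma:heterogamma} when both are active), and then appeals to the forward-invariance result of \cite{Xu2015Robustness} uniformly, with no explicit treatment of switching. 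You instead prove a \emph{static} bound: whenever $\|\Delta \mathbf p_{ij}\|\geq D_{\mathcal N}^{i}$ (or, by symmetry of $h_{ij}$, $\geq D_{\mathcal N}^{j}$), the speed limits alone force $h_{ij}\geq \sqrt[3]{2(\alpha_i+\alpha_{\min})/\gamma_i}>0$ — your algebra verifying that the cube-root term in (\ref{eqn:heteroneig}) is exactly this margin is correct — so no differential inequality is needed at all outside the both-neighbors regime, and you only invoke the summed constraints (\ref{eqn:hybridi})--(\ref{eqn:hybridj}) (or Lemma \ref{lemma:heterogamma}) on time intervals where both agents enforce them, gluing across activation instants by continuity of $h_{ij}$ and the strict positive margin at entry. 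What your route buys is elementariness and self-containment: it needs only the speed bounds and the definition of $D_{\mathcal N}^{i}$, not the acceleration-limit worst-case bound on $\dot h_{ij}$ imported from \cite{borrmann2015Swarm}, and it handles the asymmetric one-sided case more transparently (automatic positivity from the non-enforcing agent's radius rather than a claimed inequality $-\dot h_{ij}<\gamma_k h_{ij}^3$). What it costs is the explicit hybrid bookkeeping over switching times, which the paper's uniform-inequality formulation sidesteps; you identified and discharged that bookkeeping, so the argument stands, under the same implicit assumptions the paper makes (initialization in $\mathcal C$, speed limits respected, and a well-posed, feasible decentralized QP so that the invariance/comparison step applies).
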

\begin{proof}
Heterogeneous agents each possesses a safety neighbor disk with different radius. Thus there are generally three scenarios considering $\forall j\in\mathcal M, j\neq i$, i.e. $\|\Delta \mathbf p_{ij}\|>\max\{D_\mathcal{N}^i,D_\mathcal{N}^j\}$, $\max\{D_\mathcal{N}^i,D_\mathcal{N}^j\}\geq\|\Delta \mathbf p_{ij}\|\geq \min\{D_\mathcal{N}^i,D_\mathcal{N}^j\}$ or $\|\Delta \mathbf p_{ij}\|<\min\{D_\mathcal{N}^i,D_\mathcal{N}^j\}$.
\begin{figure}[h]
  \centering
  \psfrag{1}{\small{$D_\mathcal{N}^j$}}
  \psfrag{2}{\small{$D_\mathcal{N}^i$}}
  \psfrag{3}{\small{$\Delta \mathbf p_{ij}$}}
  \resizebox{1.8in}{!}{\includegraphics{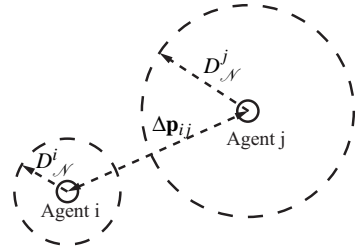}}
  \label{fig:CBFHetero1}
  \caption{Two heterogeneous agents with different safety neighbor disks}
\end{figure}

When $\|\Delta \mathbf p_{ij}\|>\max\{D_\mathcal{N}^i,D_\mathcal{N}^j\}$, both agents are not within neighbor's range. We can prove $-\dot{h}_{ij} \leq \min\{\gamma_i,\gamma_j\}h_{ij}^3$ following similar reasoning of \emph{Theorem} \cite{borrmann2015Swarm} by considering the worst-case scenario.

When $\max\{D_\mathcal{N}^i,D_\mathcal{N}^j\}\geq\|\Delta \mathbf p_{ij}\|\geq \min\{D_\mathcal{N}^i,D_\mathcal{N}^j\}$, one agent is a neighbor of the other, while the other agent is not. Following \emph{Theorem} 1, we can prove $-\dot{h}_{ij} < \gamma_k {h_{ij}^3}$, where $k=\underset{k\in\{i,j\}}{\text{argmin}} \,D_\mathcal{N}^k$. When $\|\Delta \mathbf p_{ij}\|<\min\{D_\mathcal{N}^i,D_\mathcal{N}^j\}$, both agents are neighbors of each other. Following \emph{Lemma} \ref{lemma:heterogamma}, it is guaranteed that $-\dot{h}_{ij}\leq \gamma' h_{ij}^3$.

To sum up, safety is guaranteed in all three cases with different ZCBF parameters. Heterogeneous agents only needs to form ZCBFs with their neighbors to guarantee safety.
\end{proof}

\subsection{Barrier Certificates with Unknown Acceleration Limits}
As discussed in Section \ref{sec:DecHeteroBar}, identifying the acceleration limits of neighboring agents can be a complicated problem. When no prior knowledge about other agents' acceleration limits is provided, we will prove that estimated values can be used. Consequently, the safe set definition will be slightly different for different agents. Let $\alpha_i$ and $\alpha_{ij}$ be agent $i$'s acceleration limit and estimate of agent $j$'s acceleration limit. The pairwise safe operation set $\bar{\mathcal C}_{ij}$ of agent $i$ is:
\vspace{-0.1in}
\begin{align*}
\bar{\mathcal C}_{ij} &= \{(\mathbf p_i, \mathbf v_i)~|~h_{ij}(\alpha_i+ \alpha_{ij}) = \frac{\Delta \mathbf{p}_{ij}^T}{\|\Delta \mathbf{p}_{ij}\|}\Delta \mathbf{v}_{ij} \\ &+ \sqrt{2(\alpha_i + \alpha_{ij})(\|\Delta \mathbf{p}_{ij}\|-D_s)}\geq 0 \}, j\neq i,
\end{align*}
With the estimated parameters, the hybrid strategy in Section \ref{sec:DecHeteroBar} for distributing safety barrier certificates is:
\vspace{-0.1in}
\begin{flalign}
&- \Delta \mathbf{p}_{ij}^T \mathbf{u}_i + \frac{\Delta \mathbf{p}_{ij}^T\Delta \mathbf{v}_{ij}}{\|\Delta \mathbf{p}_{ij}\|^2}\Delta \mathbf{p}_{ij}^T\mathbf{v}_i - \Delta \mathbf{v}_{ij}^T\mathbf{v}_i \leq \nonumber\\
 &\frac{\alpha_i}{\alpha_i+\alpha_{ij}} (\gamma_i h_{ij}^3(\alpha_i+ \alpha_{ij})\|\Delta \mathbf{p}_{ij}\| + \frac{\sqrt{\alpha_i+\alpha_{ij}}\Delta \mathbf{p}_{ij}^T\Delta \mathbf{v}_{ij} }{\sqrt{2(\|\Delta \mathbf{p}_{ij}\|-D_s)}}) \label{eqn:cbfdecc1} \\
& \Delta \mathbf{p}_{ij}^T \mathbf{u}_j - \frac{\Delta \mathbf{p}_{ij}^T\Delta \mathbf{v}_{ij}}{\|\Delta \mathbf{p}_{ij}\|^2}\Delta \mathbf{p}_{ij}^T\mathbf{v}_j + \Delta \mathbf{v}_{ij}^T\mathbf{v}_j \leq \nonumber\\
&\frac{\alpha_j}{\alpha_j+\alpha_{ji}} (\gamma_j h_{ji}^3(\alpha_j+ \alpha_{ji})\|\Delta \mathbf{p}_{ij}\|   + \frac{\sqrt{\alpha_j+\alpha_{ji}}\Delta \mathbf{p}_{ij}^T\Delta \mathbf{v}_{ij} }{\sqrt{2(\|\Delta \mathbf{p}_{ij}\|-D_s)}}) \hspace{-0.1in} \label{eqn:cbfdecc2}
\end{flalign}

Before introducing the estimation method, we need to make sure that safety is still guaranteed when imperfect estimation parameters are used. In order to guarantee safety with inaccurate parameters, it is desirable to ensure that $\bar{\mathcal C}_{ij}$ is always subset of $\mathcal C_{ij}$. Notice that when $\alpha_{ji} \leq \alpha_i, \alpha_{ij} \leq \alpha_j$, we have $\bar{\mathcal C}_{ij} \subseteq \mathcal C_{ij}$. It is intuitive to guess that agents are safe if conservative estimates of neigboring agents' acceleration limits are used. We will prove this intuition for decentralized heterogeneous barrier certificates.
\begin{lemma} \label{lemma:hybridest}
If $\alpha_{ji} \leq \alpha_i, \alpha_{ij} \leq \alpha_j$ and the safety barrier constraints in (\ref{eqn:cbfdecc1}),(\ref{eqn:cbfdecc2}) are satisfied, safety is still guaranteed.
\end{lemma}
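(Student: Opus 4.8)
The plan is to follow the template of the proof of Lemma~\ref{lemma:heterogamma}: add the two per-agent barrier constraints so that all the $\mathbf v_i,\mathbf v_j$ cross-terms cancel, read the sum as a differential inequality for $h_{ij}$ \emph{evaluated with the true relative braking parameter} $\alpha_i+\alpha_j$, and show that this inequality is still a valid ZCBF condition for $\mathcal C_{ij}$. Forward invariance of $\mathcal C=\prod_{i}\bigcap_{j\neq i}\mathcal C_{ij}$, and hence safety in the sense of \emph{Definition}~2, then follows by applying \emph{Theorem}~\cite{Xu2015Robustness} pairwise. Abbreviate the two estimated barrier values by $h^{(i)}=h_{ij}(\alpha_i+\alpha_{ij})$ and $h^{(j)}=h_{ij}(\alpha_j+\alpha_{ji})$, and keep $h_{ij}$ for the true one $h_{ij}(\alpha_i+\alpha_j)$. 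Since $\mu\mapsto h_{ij}(\mu)=\tfrac{\Delta\mathbf p_{ij}^{T}\Delta\mathbf v_{ij}}{\|\Delta\mathbf p_{ij}\|}+\sqrt{2\mu(\|\Delta\mathbf p_{ij}\|-D_s)}$ is nondecreasing, the hypothesis $\alpha_{ij}\le\alpha_j$, $\alpha_{ji}\le\alpha_i$ gives at once $h^{(i)}\le h_{ij}$ and $h^{(j)}\le h_{ij}$, in particular $\bar{\mathcal C}_{ij}\subseteq\mathcal C_{ij}$, so a swarm that starts safe starts in every $\mathcal C_{ij}$.

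The mechanical step is to sum (\ref{eqn:cbfdecc1}) and (\ref{eqn:cbfdecc2}). As in Lemma~\ref{lemma:heterogamma} the position and velocity terms reassemble into $\tfrac{(\Delta\mathbf p_{ij}^{T}\Delta\mathbf v_{ij})^{2}}{\|\Delta\mathbf p_{ij}\|^{2}}-\|\Delta\mathbf v_{ij}\|^{2}$, and inserting the chain-rule identity underlying the passage from (\ref{eqn:zcbfc}) to (\ref{eqn:cbf3}), namely $\|\Delta\mathbf p_{ij}\|\dot h_{ij}=\Delta\mathbf p_{ij}^{T}\Delta\mathbf u_{ij}-\tfrac{(\Delta\mathbf p_{ij}^{T}\Delta\mathbf v_{ij})^{2}}{\|\Delta\mathbf p_{ij}\|^{2}}+\|\Delta\mathbf v_{ij}\|^{2}+\tfrac{\sqrt{\alpha_i+\alpha_j}\,\Delta\mathbf p_{ij}^{T}\Delta\mathbf v_{ij}}{\sqrt{2(\|\Delta\mathbf p_{ij}\|-D_s)}}$, leaves $\dot h_{ij}\ge-\big(\tfrac{\alpha_i\gamma_i}{\alpha_i+\alpha_{ij}}(h^{(i)})^{3}+\tfrac{\alpha_j\gamma_j}{\alpha_j+\alpha_{ji}}(h^{(j)})^{3}\big)-\tfrac{C\,\Delta\mathbf p_{ij}^{T}\Delta\mathbf v_{ij}}{\|\Delta\mathbf p_{ij}\|\sqrt{2(\|\Delta\mathbf p_{ij}\|-D_s)}}$, where $C=\tfrac{\alpha_i}{\sqrt{\alpha_i+\alpha_{ij}}}+\tfrac{\alpha_j}{\sqrt{\alpha_j+\alpha_{ji}}}-\sqrt{\alpha_i+\alpha_j}$. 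The second thing the conservative-estimate hypothesis buys is $C\ge 0$: from $\alpha_i+\alpha_{ij}\le\alpha_i+\alpha_j$ and $\alpha_j+\alpha_{ji}\le\alpha_i+\alpha_j$ one gets $\tfrac{\alpha_i}{\sqrt{\alpha_i+\alpha_{ij}}}+\tfrac{\alpha_j}{\sqrt{\alpha_j+\alpha_{ji}}}\ge\tfrac{\alpha_i+\alpha_j}{\sqrt{\alpha_i+\alpha_j}}$. So, compared with the clean identity $-\dot h_{ij}\le\gamma' h_{ij}^{3}$ obtained in Lemma~\ref{lemma:heterogamma}, the parameter mismatch produces exactly two extra terms, and it remains to check that both point the favorable way.

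The part I expect to be the real obstacle is converting this bound into forward invariance of $\mathcal C_{ij}$ in spite of the gap between $h^{(i)},h^{(j)}$ and $h_{ij}$. The key is the boundary: on $\partial\mathcal C_{ij}=\{h_{ij}=0\}$ one has $\Delta\mathbf p_{ij}^{T}\Delta\mathbf v_{ij}=-\|\Delta\mathbf p_{ij}\|\sqrt{2(\alpha_i+\alpha_j)(\|\Delta\mathbf p_{ij}\|-D_s)}\le 0$ (the agents are not moving apart), so the last term equals $C\sqrt{\alpha_i+\alpha_j}\ge 0$, while $h^{(i)},h^{(j)}\le h_{ij}=0$ makes the cubic bracket nonpositive; hence $\dot h_{ij}\ge 0$ on $\partial\mathcal C_{ij}$. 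Just inside the set and away from the degenerate locus $\|\Delta\mathbf p_{ij}\|=D_s$, the state still has $\Delta\mathbf p_{ij}^{T}\Delta\mathbf v_{ij}\le 0$ when $h_{ij}$ is small, so using $(h^{(i)})^{3}\le h_{ij}^{3}$ the bound sharpens to $\dot h_{ij}\ge-\bar\gamma_{ij}h_{ij}^{3}$ with $\bar\gamma_{ij}=\tfrac{\alpha_i\gamma_i}{\alpha_i+\alpha_{ij}}+\tfrac{\alpha_j\gamma_j}{\alpha_j+\alpha_{ji}}$, i.e.\ $h_{ij}$ is a ZCBF for $\mathcal C_{ij}$ on a neighborhood of its boundary and \emph{Theorem}~\cite{Xu2015Robustness} (equivalently a Nagumo-type argument) yields forward invariance of $\mathcal C_{ij}$; doing this for every pair $i\neq j$ makes $\mathcal C$ forward invariant, which is exactly the statement of \emph{Definition}~2. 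The only corner requiring care is $h_{ij}\approx 0$ together with $\|\Delta\mathbf p_{ij}\|\approx D_s$, where the $1/\sqrt{2(\|\Delta\mathbf p_{ij}\|-D_s)}$ factor in (\ref{eqn:cbfdecc1})--(\ref{eqn:cbfdecc2}) forces maximal braking; there I would close the argument using the $\dot h_{ij}\ge 0$ estimate that holds on all of $\partial\mathcal C_{ij}$ together with continuity of solutions.
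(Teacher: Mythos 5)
Your proposal is correct and follows essentially the same route as the paper's proof: you sum (\ref{eqn:cbfdecc1}) and (\ref{eqn:cbfdecc2}), use $\alpha_{ji}\le\alpha_i$, $\alpha_{ij}\le\alpha_j$ to obtain both the coefficient bound $\frac{\alpha_i}{\sqrt{\alpha_i+\alpha_{ij}}}+\frac{\alpha_j}{\sqrt{\alpha_j+\alpha_{ji}}}\ge\sqrt{\alpha_i+\alpha_j}$ and $h_{ij}(\alpha_i+\alpha_{ij}),\,h_{ij}(\alpha_j+\alpha_{ji})\le h_{ij}(\alpha_i+\alpha_j)$, and exploit that only the approaching regime $\Delta\mathbf p_{ij}^{T}\Delta\mathbf v_{ij}\le 0$ can threaten invariance. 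The only difference is the packaging of the final step---your boundary/Nagumo-style argument near $\partial\mathcal C_{ij}$ versus the paper's explicit case split on the sign of $\Delta\mathbf p_{ij}^{T}\Delta\mathbf v_{ij}$ with a switching argument---and both handle the degenerate corner $\|\Delta\mathbf p_{ij}\|\approx D_s$ at the same informal level.
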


\begin{proof}
When agents $i$ and $j$ use their own estimates of acceleration limits based on (\ref{eqn:cbfdecc1}) and (\ref{eqn:cbfdecc2}), we can get:
\begin{flalign}\label{eqn:cbfestimate}
&- \Delta \mathbf{p}_{ij}^T \Delta\mathbf{u}_{ij} + \frac{(\Delta \mathbf{p}_{ij}^T\Delta \mathbf{v}_{ij})^2}{\|\Delta \mathbf{p}_{ij}\|^2} - \Delta \mathbf{v}_{ij}^T\Delta\mathbf{v}_{ij} \nonumber\\
&\leq \frac{\alpha_i\gamma_i h_{ij}^3(\alpha_i+\alpha_{ij})}{\alpha_i+\alpha_{ij}} \|\Delta \mathbf{p}_{ij}\| + \frac{\alpha_j\gamma_j h_{ji}^3(\alpha_j+\alpha_{ji})}{\alpha_j+\alpha_{ji}} \|\Delta \mathbf{p}_{ij}\| \nonumber\\
&+ (\frac{\alpha_i}{\sqrt{\alpha_i+\alpha_{ij}}}+\frac{\alpha_j}{\sqrt{\alpha_j+\alpha_{ji}}})\frac{\Delta \mathbf{p}_{ij}^T\Delta \mathbf{v}_{ij} }{\sqrt{2(\|\Delta \mathbf{p}_{ij}\|-D_s)}},  
\end{flalign}
Recall that if perfect parameter estimation is achieved ($\alpha_{ji} = \alpha_i, \alpha_{ij} = \alpha_j$), (\ref{eqn:cbfestimate}) is identical to (\ref{eqn:cbfgamma}). Next, we will discuss safety under two different scenarios where two agents are moving closer or further away from each other:

\begin{enumerate}
\item when $\Delta \mathbf{p}_{ij}^T\Delta \mathbf{v}_{ij}\leq 0$, agents $i$  and $j$ are moving closer to each other. With $\alpha_{ji} \leq \alpha_i, \alpha_{ij} \leq \alpha_j$, we have $ \frac{\alpha_i}{\sqrt{\alpha_i+\alpha_{ij}}} +\frac{\alpha_j}{\sqrt{\alpha_j+\alpha_{ji}}} \geq \sqrt{\alpha_i + \alpha_j}$. Thus
\vspace{-0.1in}
\begin{eqnarray}\label{eqn:cbfestimate3}
&- \Delta \mathbf{p}_{ij}^T \Delta\mathbf{u}_{ij} + \frac{(\Delta \mathbf{p}_{ij}^T\Delta \mathbf{v}_{ij})^2}{\|\Delta \mathbf{p}_{ij}\|^2} - \Delta \mathbf{v}_{ij}^T\Delta\mathbf{v}_{ij} \nonumber\\
 \leq & \bar{\gamma} h_{ij}^3(\alpha_i+\alpha_j)\|\Delta \mathbf{p}_{ij}\| + \frac{\sqrt{\alpha_i+\alpha_j}\Delta \mathbf{p}_{ij}^T\Delta \mathbf{v}_{ij} }{\sqrt{2(\|\Delta \mathbf{p}_{ij}\|-D_s)}},  
\end{eqnarray}
where $\bar{\gamma}=\frac{\alpha_i\gamma_i }{\alpha_i+\alpha_{ij}} + \frac{\alpha_j\gamma_j }{\alpha_j+\alpha_{ji}}$. Compared with (\ref{eqn:cbf3}), this inequality can be rewritten as $-\dot{h}_{ij}(\alpha_i +\alpha_j)\leq \bar{\gamma}{h_{ij}(\alpha_i + \alpha_j)^3}$, which guarantees safety as if a weighted version of $\gamma$ is adopted. This means that, if $\Delta \mathbf{p}_{ij}^T\Delta \mathbf{v}_{ij}\leq 0$, the forward invariance of the nominal safe operation set $\mathcal{C}$ is guaranteed. 

\item when $\Delta \mathbf{p}_{ij}^T\Delta \mathbf{v}_{ij}> 0$ (agents are moving away from each other), it is guaranteed to have $h_{ij}(\alpha_i +\alpha_j) \geq 0$. Thus agents always stays in the nominal safe operation set $\mathcal{C}$ in this scenario.
\end{enumerate}

Now we are one step away from proving the safety guarantee for all cases. Because agents are switching back and forth between the two cases. The switchings might compromise safety. In case (1), forward set invariance requires agent $i$ to always start in $\mathcal{C}$ after each switching. Due to the second order dynamical model used for barrier certificates, $\Delta \mathbf{p}_{ij}^T\Delta \mathbf{v}_{ij}$ is continuous with respect to time. Thus the switching between two cases always occurs at $\Delta \mathbf{p}_{ij}^T\Delta \mathbf{v}_{ij}= 0$, where $h_{ij}(\alpha_i +\alpha_j) \geq 0$. Combining the two scenarios with the safe switching condition, agent $i$ is guaranteed to be safe with respect to the nominal safe operation set $\mathcal{C}$.
\end{proof}

Now, what is left to do is to find an appropriate estimation method to estimate the acceleration limits of neighboring agents with local observation. With the local sensor measurements of neighboring agents, we can construct a distributed least squares estimator or Kalman filter \cite{mesbahi2010graph} to estimate the current acceleration $\|\bar{\mathbf u}_j\|$ of agent $j$. The steps to update the estimate of $\alpha_j$ can be designed as:
\begin{enumerate}
\item Set conservative initial guess as $\alpha_{ij}(t_0)=\alpha_{min}$.
\item Use local observations to update $\|\bar{\mathbf u}_j\|$.
\item Update $\alpha_{ij}$ with $\dot{\alpha}_{ij} = k(\max\{\alpha_{ij},\|\bar{\mathbf u}_j\|\} - \alpha_{ij})$, where $k$ depends on the accuracy of local observations. 
\end{enumerate}
note that this strategy will ensure that parameter estimation satisfies $\alpha_{ji} \leq \alpha_i, \alpha_{ij} \leq \alpha_j$. Thus safety is still guaranteed using the estimated parameters due to \textit{lemma} \ref{lemma:hybridest}. With this estimation strategy, agents do not need to know the acceleration limits of neighboring agents. They can start with conservative initial guesses, and gradually improve their knowledge with local observations without endangering safety.

\section{Simulation results} \label{sec:sim}
A multi-robot system with six heterogeneous agents is simulated with MATLAB. Each agent is modelled with double integrator dynamics and executes a goal-to-goal controller without considering collision avoidance. This system contains two types of agents: small agile agents ($\alpha_{s} = 1.2~m/s^2$, safety radius is $0.2~m$); large cumbersome agent ($\alpha_{l} = 0.6~m/s^2$, safety radius is $0.4~m$). As illustrate in Fig.\ref{fig:sim6agents}, the objective of the pre-designed controller is to make all agents exchange position with the agents on the opposite side of the large circle. Without collision avoidance strategy, the goal-to-goal controller will lead to collision of all agents in the middle.

The heterogeneous safety barrier certificates are implemented side by side with the pre-designed control strategy. All agents started heading towards the center following the goal-to-goal controller (Fig. \ref{fig:t1}). As they moved closer to each other, the safety barrier certificates were activated and kept all agents with enough safety distance away from each other (Fig. \ref{fig:t2}). The small agents are more agile and took up more responsibilities in avoiding collision, while the cumbersome agent decelerated but still continued its own path because of its large inertia (Fig. \ref{fig:t3}). When the large agent was about to reach its destination, its speed is almost zero. Other small agents were safe to pursue their own goals without worrying about colliding with the large agent (Fig. \ref{fig:t4}). At last, the heterogeneous safety barrier certificates successfully helped navigate all agents out of the ``crowded" scenarios and achieved their individual objectives.

Note that the core of safety barrier certificates is a QP-based controller, which can be executed very efficiently. Compared with conventional multi-agent collision avoidance methods, the proposed method is more suitable for real-time application on large-scale robotic swarm because it does not require computation of complicated forward reachable set.
\begin{figure}[h]
\centering
\begin{subfigure}{.23\textwidth}
  \centering
  \resizebox{1.5in}{!}{\includegraphics{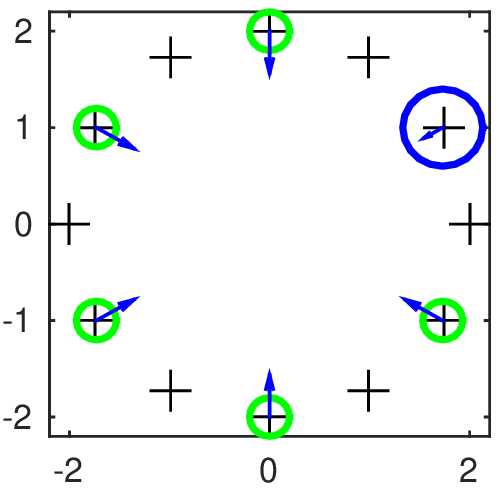}}
  \caption{Time step t = 10}
  \label{fig:t1}
\end{subfigure}
\begin{subfigure}{.23\textwidth}
  \centering
  \resizebox{1.5in}{!}{\includegraphics{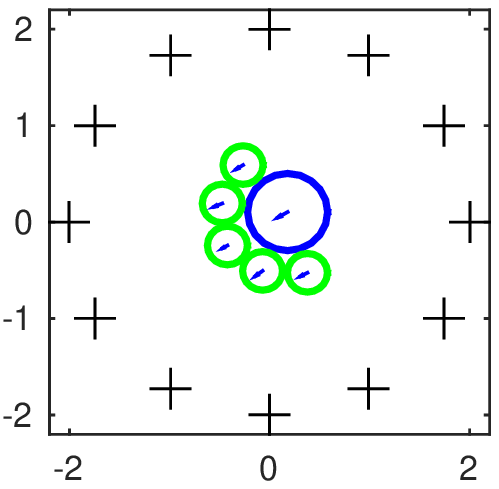}}
  \caption{Time step t = 520}
  \label{fig:t2}
\end{subfigure}%
\begin{subfigure}{.23\textwidth}
  \centering
  \resizebox{1.5in}{!}{\includegraphics{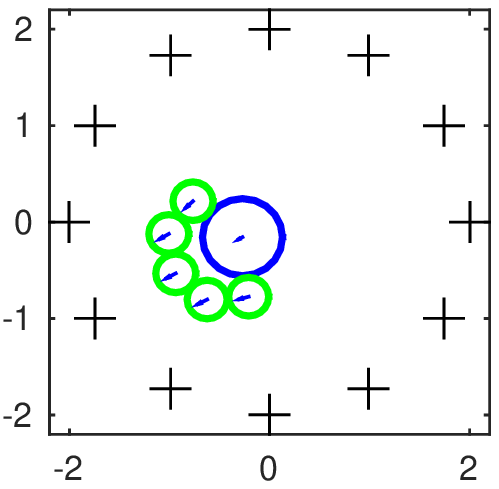}}
  \caption{Time step t = 1000}
  \label{fig:t3}
\end{subfigure}\\
\begin{subfigure}{.23\textwidth}
  \centering
  \resizebox{1.6in}{!}{\includegraphics{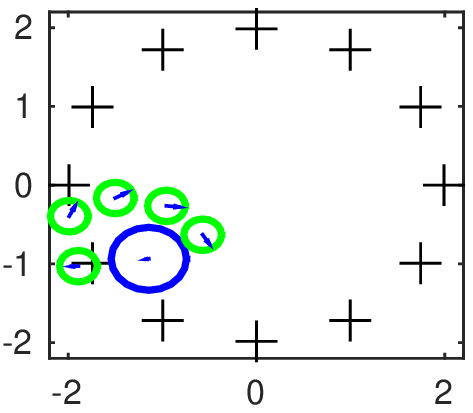}}
  \caption{Time step t = 2400}
  \label{fig:t4}
\end{subfigure}%
\begin{subfigure}{.23\textwidth}
  \centering
  \resizebox{1.5in}{!}{\includegraphics{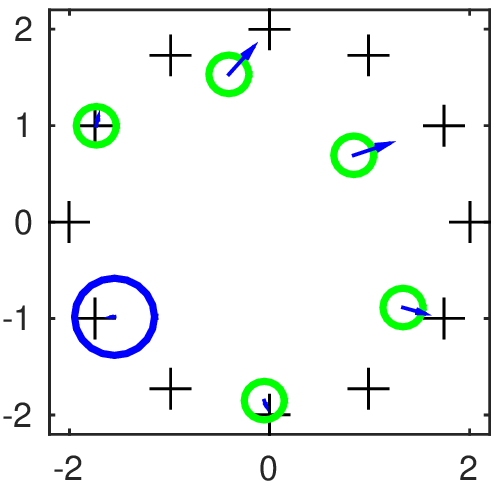}}
  \caption{Time step t = 2830}
  \label{fig:t5}
\end{subfigure}%
\caption{Six robot agents regulated by heterogeneous safety barrier certificates. The acceleration limits of small and large agents are $1.2~m/s^2$ and $0.6~m/s^2$. The speed limits of all agents are $0.6~m/s$. The small and large circles represent the safety radius of different agents, which are $0.2~m$ and $0.4~m$ respectively. Units for X and Y coordinates in the plots are in meters.}
\label{fig:sim6agents}
\end{figure}

\section{Experimental results} \label{sec:exp}
The heterogeneous safety barrier certificates were implemented on a heterogeneous robotic swarm with three Khepera III robots ($\alpha_K = 2.0 ~m/s^2$) and one Magellan Pro robot ($\alpha_M = 0.5 ~m/s^2$). The positions of robots are tracked by Optitrack motion capture system. Those two types of robots have distinct dimensions and dynamical capabilities. The diameters of Khepera III and Magellan Pro robots are $13~cm$ and $41~cm$. The actual dynamical model of mobile robots used in this experiment is unicycle model, which is approximated with double integrated dynamics using Lyapunov based approach. The pre-designed controller is a goal-to-goal controller ($\hat{\mathbf u}_i=-k_1(\mathbf p_i- \mathbf r_i) - k_2\mathbf v_i$), which exchanges the positions of agents on the diagonal line of a rectangle, without considering collision avoidance. The heterogeneous safety barrier certificates were executed as a lower level safety program with no knowledge about overall goal of the higher level controller.

Fig. \ref{fig:expKM} shows a overhead view of the robots during the experiment and plots of corresponding experimental data. All four robots started heading straightly towards the opposite side of the rectangle (Fig. \ref{fig:t1e}). The safety barrier was inactive because the pre-designed coordination control command is considered safe. When robots moved closer, the safety barrier interfered because collision was about to happen. As illustrated in (Fig. \ref{fig:t2e}), three Khepera III robots turned around to avoid collision, while the Magellan robot kept pushing forward. This is because Magellan Pro robot has more momentum and can not brake fast enough to avoid collision. Those more agile Khepera III robots carried more responsibilities in collision avoidance when Magellan Pro robot reacted slowly. When the Magellan Pro robot almost reached its goal position and became slower in motion, other Khepera III robots got the chance to pursue their goals (Fig. \ref{fig:t3e}). It can be observed that the safety barrier directed robots away from collision and computes the command that is closest to pre-designed control command. After robots navigated away from the ``crowded" area, the pre-designed controller took over again (Fig. \ref{fig:t4e}). Ultimately, all robots reached desired configuration, i.e. exchange position with robots on the opposite side (Fig. \ref{fig:t5e}). A video can be found online \cite{Hetero:video}.

\begin{figure}[H]
\centering
\begin{subfigure}{.20\textwidth}
  \centering
  \resizebox{1.6in}{!}{\includegraphics{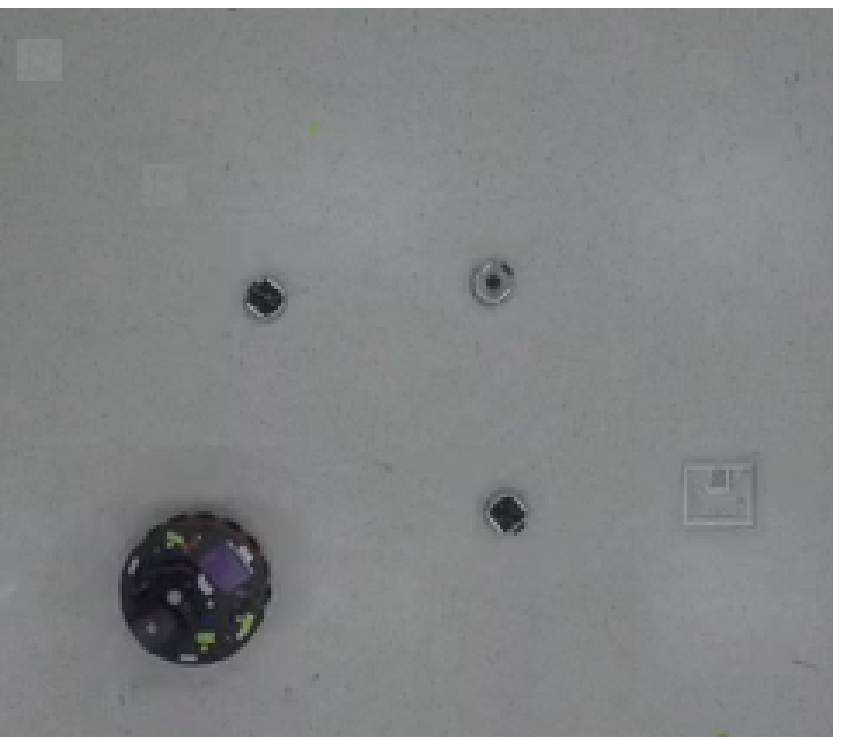}}
  \caption{Agents at 4.0s}
  \label{fig:t1e}
\end{subfigure}%
\hspace{0.02in}
\begin{subfigure}{.22\textwidth}
  \centering
  \resizebox{1.8in}{!}{\includegraphics{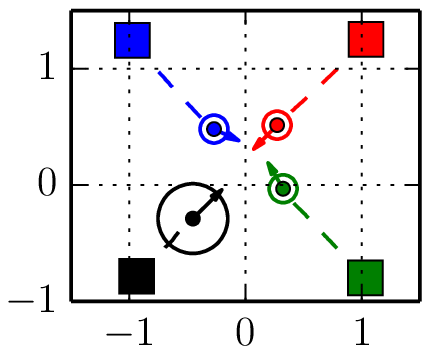}}
  \label{fig:t1ep}
\end{subfigure}%
\hspace{0.05in}
\begin{subfigure}{.20\textwidth}
  \centering
  \resizebox{1.6in}{!}{\includegraphics{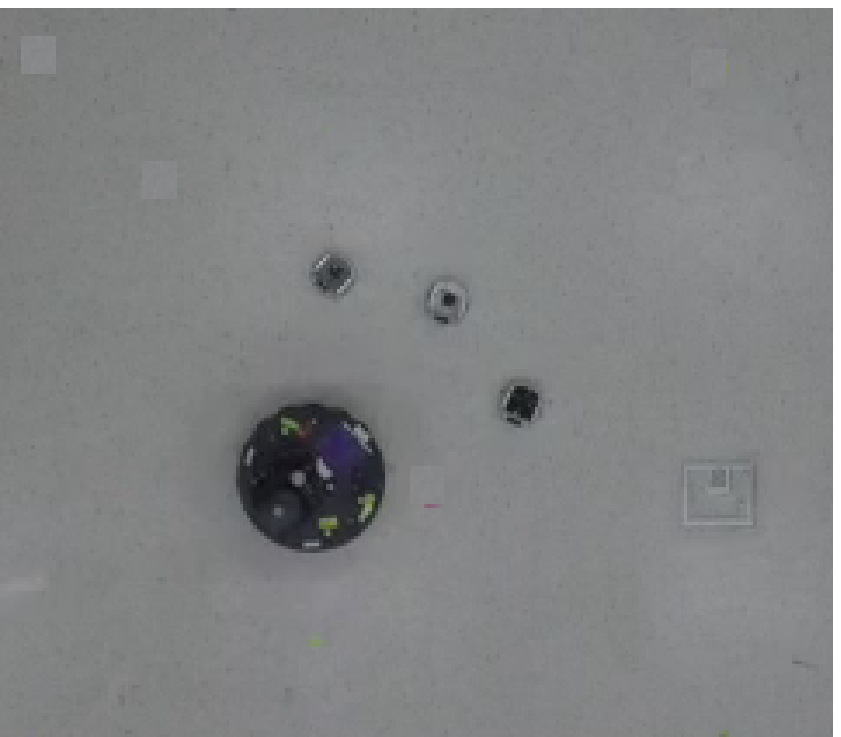}}
  \caption{Agents at 7.0s}
  \label{fig:t2e}
\end{subfigure}
\begin{subfigure}{.22\textwidth}
  \centering
  \resizebox{1.8in}{!}{\includegraphics{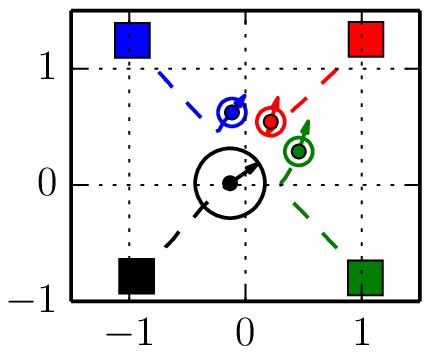}}
  \label{fig:t2ep}
\end{subfigure}
\hspace{0.05in}
\begin{subfigure}{.20\textwidth}
  \centering
  \resizebox{1.6in}{!}{\includegraphics{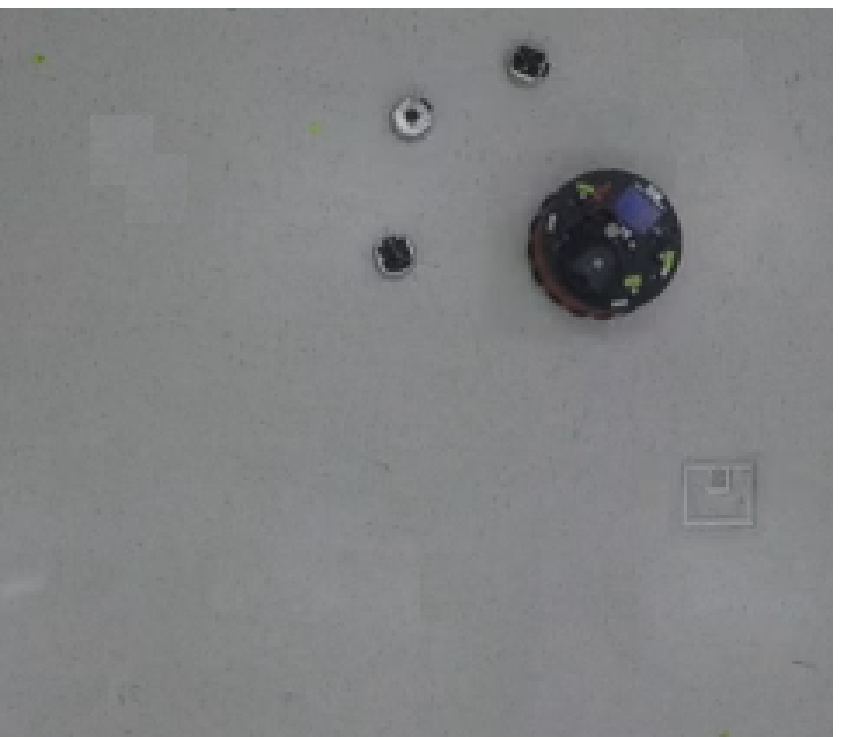}}
  \caption{Agents at 13.0s}
  \label{fig:t3e}
\end{subfigure}%
\hspace{0.02in}
\begin{subfigure}{.22\textwidth}
  \centering
  \resizebox{1.8in}{!}{\includegraphics{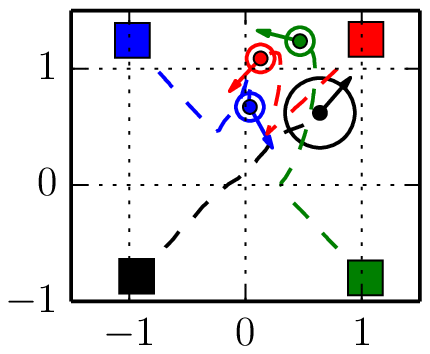}}
  \label{fig:t3ep}
\end{subfigure}%
\hspace{0.02in}
\begin{subfigure}{.20\textwidth}
  \centering
  \resizebox{1.6in}{!}{\includegraphics{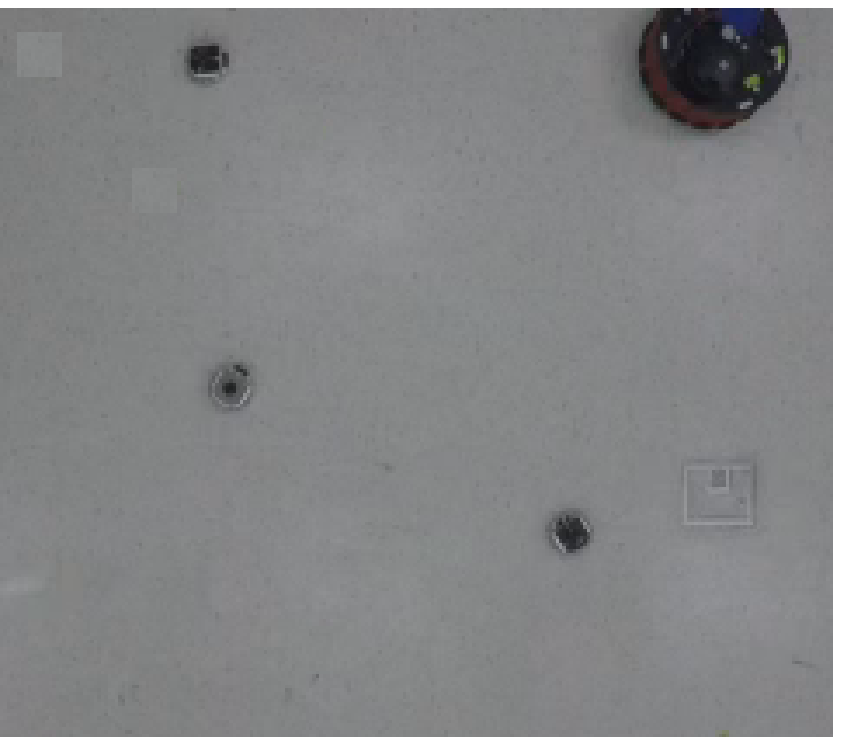}}
  \caption{Agent at 16.0s}
  \label{fig:t4e}
\end{subfigure}%
\hspace{0.02in}
\begin{subfigure}{.22\textwidth}
  \centering
  \resizebox{1.8in}{!}{\includegraphics{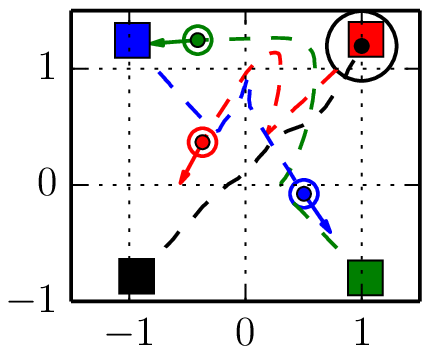}}
  \label{fig:t4ep}
\end{subfigure}
\begin{subfigure}{.20\textwidth}
  \centering
  \resizebox{1.6in}{!}{\includegraphics{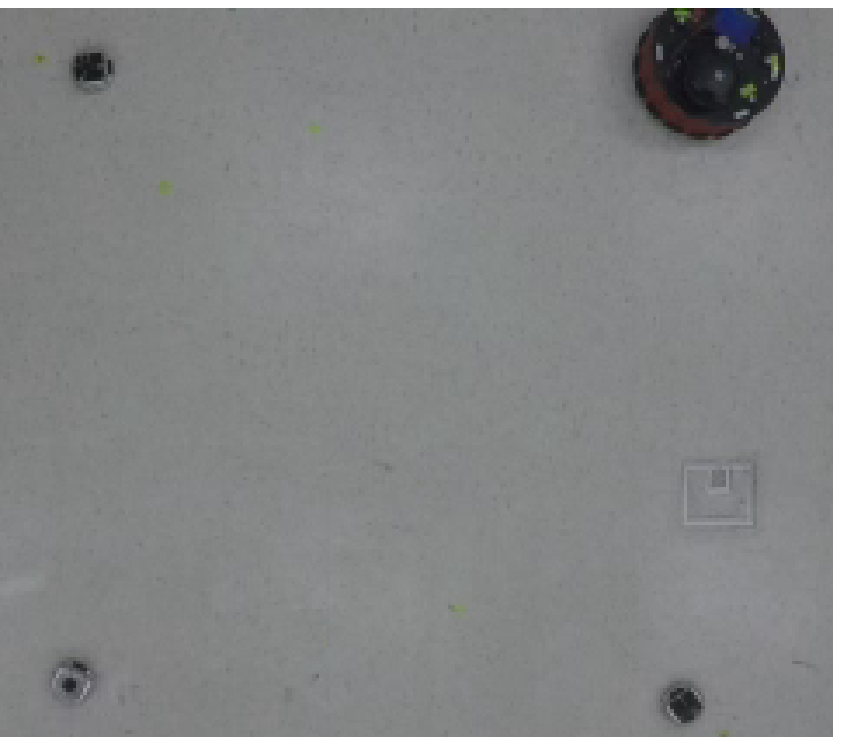}}
  \caption{Agent at 21.3s}
  \label{fig:t5e}
\end{subfigure}%
\vspace{-0.1in}
\begin{subfigure}{.22\textwidth}
  \centering
  \resizebox{1.8in}{!}{\includegraphics{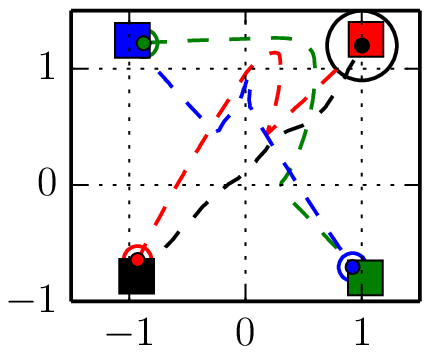}}
  \label{fig:t5ep}
\end{subfigure}%
\caption{Test run of three Khepera robots (small circles) and one Magellan robot (large circle) with heterogeneous safety barrier certificates. The arrow, circle and dashed line represent current velocity, position and trajectory of robot agents. The square markers stand for initial and goal positions.}
\label{fig:expKM}
\end{figure}


\section{Conclusion and future work} \label{sec:conclude}
The heterogeneous safety barrier certificates proposed in this paper provides a provable way to address the challenges in collision avoidance brought by heterogeneity in robots' dynamical capabilities. Both simulation and experimental results validate the effectiveness of the proposed approach. While studying those results, several interesting future research directions also arise. When the objectives of several agents conflict with each other, the agents sometimes get into a deadlock. When deadlock happens, safety is guaranteed but desired tasks can not be completed. It is important to design a strategy that breaks deadlock to ensure task completion. 

In some ``crowded" situations, several safety barrier constraints might conflict with each other, rendering the optimization-based controller infeasible. To remedy this problem, zeroing control barrier function is designed to pull the states of agents back to the safe operation set when violation occurs. However, for some safety critical systems, synthesizing safety barrier certificates that are guaranteed feasible is very significant.

\addtolength{\textheight}{-12cm}   


\bibliographystyle{abbrv}
\bibliography{mybib}
\end{document}